\newcommand{\G}{\mathcal{G}}
\newcommand{\D}{\mathcal{D}}
\newcommand{\C}{\mathcal{C}}
\newcommand{\I}{\mathcal{I}}
\newcommand{\K}{\mathcal{K}}
\newcommand{\V}{\mathbf{V}}
\newcommand{\E}{\mathbf{E}}
\newcommand{\R}{\mathcal{R}}
\newcommand{\F}{\mathcal{F}}
\newcommand{\adj}[2]{\mathrm{adj}_{{#1}}({#2})}
\newcommand{\nb}[2]{\mathrm{ne}_{{#1}}({#2})}
\newcommand{\pa}[2]{\mathrm{pa}_{{#1}}({#2})}
\newcommand{\any}{*\mkern-7mu-\mkern-7mu*}
\newtheorem{definition}{Definition}
\newtheorem{lemma}{Lemma}
\newtheorem{corollary}{Corollary}
\newtheorem{example}{Example}
\newtheorem{theorem}{Theorem}
\newtheorem{assumption}{Assumption}
\definecolor{silver}{RGB}{225, 225, 225}
\title{Do we become wiser with time? \\
On causal equivalence with tiered background knowledge}
\author[1,2]{\href{mailto:<bang@uni-bremen.de>?Subject=Your UAI 2023 paper}{Christine~W.~Bang}{}}
\author[1,2]{Vanessa~Didelez}
  \affil[1]{
    Faculty of Mathematics and Computer Science\\
    University of Bremen\\
    Bremen, Germany
}
\affil[2]{
    Leibniz Institute for Prevention Research and Epidemiology – BIPS\\
    Bremen, Germany
}
\begin{document}
  
\maketitle

\begin{abstract}
Equivalence classes of DAGs (represented by CPDAGs) may be too large to provide useful causal information.
Here, we address incorporating {\em tiered} background knowledge yielding restricted equivalence classes represented by `tiered MPDAGs'. Tiered knowledge leads to considerable gains in informativeness and computational efficiency: We show that construction of tiered MPDAGs only requires application of Meek's 1st rule, and that tiered MPDAGs (unlike general MPDAGs) are chain graphs with chordal components. This entails simplifications e.g. of determining valid adjustment sets for causal effect estimation. Further, we characterise when one tiered ordering is more informative than another, providing insights into useful aspects of background knowledge.
\end{abstract}

\section{Introduction}
We consider equivalence classes of DAGs represented by completed partially directed acyclic graphs (CPDAGs), occurring as outputs of causal discovery algorithms. A first characterisation of equivalent DAGs was given by \citet{verma1990} and a full characterisation of CPDAGs by \citet{andersson1997}. Often, domain expertise provides additional information about shared features of the graphs in a class. Restricting the equivalence class by background knowledge yields a type of partially directed acyclic graph (PDAG) that is potentially much more informative due to  more induced edge orientations. \citet{meek1995} provided a set of orientation rules to obtain a graph encoding the maximal implied information, and the resulting graph is then a maximally oriented partially directed acyclic graph (MPDAG) \citep{perkovic2017}. While a CPDAG represents an independence model common to all DAGs in the equivalence class, an MPDAG represents an independence model as well as additional causal or directional information that is common to all DAGs in a restricted equivalence class. DAGs and CPDAGs are special cases of MPDAGs; DAGs are MPDAGs with full (or sufficient) background knowledge, while CPDAGs are MPDAGs with no (or redundant) background knowledge. A general characterisation of MPDAGs was given by \citet{fang2022representation}. The interpretation of MPDAGs was described in  detail by \citet{perkovic2017} and is considerably more involved than that of  CPDAGs. 

Background knowledge can  be induced by, e.g., well-established causal or logical relations. Some kinds of knowledge, e.g. temporal or sequential structures, imply that the nodes can be partitioned into ordered tiers. This is the case in many settings where longitudinal data is collected, e.g. cohort or panel studies, common in sociology, epidemiology etc. In particular, this kind of data structure is used in the field of life course epidemiology \citep{kuh2004life}. Tiered background knowledge is typically unambiguous and it is intuitively obvious that it must be useful. Indeed, implementations of algorithms for constraint-based causal discovery with  a given tiered structure exist (e.g. \citet{scheines1998tetrad}), and have been applied to cohort data for life course analyses \citep{petersen2021data,foraita2022}, but the general properties of these restricted model classes have not yet been investigated. Here, we provide the first formal in-depths analysis of equivalence classes restricted by tiered knowledge. We show that there are several desirable properties, distinguishing tiered from other kinds of background knowledge. Thus, we focus on MPDAGs arising from imposing a tiered ordering, which we term `tiered MPDAGs'. We show that under the key properties of completeness and transitivity, tiered background knowledge cannot induce partially directed cycles,  and, moreover, that tiered MPDAGs are chain graphs with chordal chain components. This allows us to, e.g., apply common methods for identifying causal effects using CPDAGs  to tiered MPDAGs without any additional processing. 

While temporal structures will often be the main source for tiered background knowledge, tiers are slightly more general. Information about logical causal directions may be available, for instance between environment and individual or between cells and molecules.
When eliciting such background knowledge, there may be more effort involved to achieve more detail; or it may be possible but more costly to achieve more detail by the design of a, say, cohort study with finer waves. Existing data from cohort studies are organised in a readily available tiered structure. But within the same wave it may be possible to further subdivide the nodes using logical, temporal or similar expertise; e.g. the first wave of a children's cohort may be composed of variables  before and after birth, pertaining to mother or child etc. With view to eliciting such details or designing  a cohort study, it is therefore interesting to characterise when different tiered restrictions are redundant versus when they are most informative.

 In Section \ref{sec:background} we formalise the concepts of (tiered) background knowledge and restricted equivalence classes. We then provide a formal characterisation of tiered MPDAGs:  Section \ref{sec:properties} describes some of their properties, and Section \ref{sec:character}  compares different tiered orderings in terms of informativeness. In Section \ref{sec:sim}, we illustrate which types of tiered knowledge are particularly informative via simulation, and in Section \ref{sec:example} we provide a practical example. Section \ref{sec:comparisons} addresses how tiered information structurally differs from other types of background knowledge. Throughout, we rely on standard notation for (causal) graphical models,  an overview of  relevant definitions can be found in Section A of the Supplement; all proofs can be found in Sections D and  E of the Supplement.

\section{Background knowledge} \label{sec:background}

While the DAGs in an equivalence class have exactly the same conditional independencies, they can still have vastly different causal implications. In this section we  introduce a smaller, and possibly more informative, subclass of causal graphs using background knowledge. 

We define that \emph{background knowledge} $\K=(\R, \F)$ consists of a set of \emph{required edges} $\R$ and a set of \emph{forbidden edges} $\F$. 

\begin{definition}[Encoding background knowledge]
A graph $\G$ encodes background knowledge $\K=(\R,\F)$ if all of the edges in $\R$ and none of the edges in $\F$ are present in $\G$. 
\end{definition}

\subsection{Restricted equivalence classes}

In our setup we only consider correct background knowledge in the sense that it agrees with an underlying (unknown) true DAG:

\begin{assumption}\label{assumptionknowledge}
The given background knowledge is correct.
\end{assumption}

Let $\C$ be a CPDAG, then by $[\C]$ we denote the equivalence class of DAGs represented by $\C$.

\begin{definition}[\cite{meek1995}]
A CPDAG $\C$ and background knowledge $\K=(\R,\F)$ are \emph{consistent} if and only if there exists a DAG $\D\in [\C]$ such that all of the edges in $\R$ and none of the edges in $\F$ are in $\D$.
\end{definition} 

Since our focus is on equivalence classes, we will assume throughout:

\begin{assumption}\label{assumptionCPDAG}
The given CPDAG is correct.
\end{assumption}

Combining Assumption \ref{assumptionknowledge} and \ref{assumptionCPDAG}, background knowledge will be  consistent with the CPDAG. In actual practice, inconsistencies  might occur, e.g. due to statistical errors when first learning the CPDAG. These are separate issues which we address elsewhere.

\SetKwInOut{Input}{input} 
\SetKwInOut{Output}{output}

\begin{algorithm}[!htbp]

\caption{Constructing $\C^\K$} \label{alg:ck}

\Input{CPDAG $\C=(\V,\E)$ and consistent background knowledge $\K=(\R,\F)$.} 

\Output{PDAG $\C^\K=(\V,\E')$}

$\E'=\E$

\ForAll{$\{ V_i- V_j\}\in\E$}{

\uIf{$\{ V_i\rightarrow V_j\}\in\F$}{

replace $\{ V_i- V_j\}$ with $\{ V_i\leftarrow V_j\}$ in $\E'$

\uElseIf{$\{ V_i\rightarrow V_j\}\in\R$}{

replace $\{ V_i- V_j\}$ with $\{ V_i\rightarrow V_j\}$ in $\E'$

}
}% if

} % for

\end{algorithm}

Consistent background knowledge is imposed on a CPDAG by orienting the corresponding undirected edges. In turn, this may allow us to  orient further undirected edges, e.g. to avoid directed cycles. \cite{meek1995} showed that maximal edge orientations implied by given background knowledge are obtained under a set of four orientation rules, also known as Meek's rules (see Figure B.1 in the Supplement).
The resulting graph then no longer represents an equivalence class, but rather a \emph{restricted equivalence class}.  

More formally, the construction is as follows: Let $\C=(\V,\E)$ be a CPDAG and let $\K=(\R,\F)$ be background knowledge consistent with $\C$. First, orient edges in $\C$ according to $\K$ as in Algorithm \ref{alg:ck}, and let $\C^\K$ denote the PDAG obtained from this procedure. Second, orient additional edges by repeated application of Meek's rules 1-4 until no further change; let $\G$ denote the resulting PDAG. Then $\G$ is the \emph{maximally oriented partially directed acyclic graph} (MPDAG) obtained from $\C$ relative to $\K$ \citep{meek1995}. 

For a given CPDAG $\C$ and background knowledge $\K$, the MPDAG obtained from $\C$ relative to $\K$ is unique. However, the origin of an MPDAG is not unique: Let $\C$ be a CPDAG, and  $\K_1$ and $\K_2$ two distinct sets of background knowledge. If repeated applications of Meek's rules to $\C^{\K_1}$ and $\C^{\K_2}$ lead to the same MPDAG, then $\K_1$ and $\K_2$ are \emph{equivalent} given $\C$. We say that $\K_2$ is \emph{redundant} relative to $\K_1$ if $\K_1\subseteq\K_2$ and $\K_1$ and $\K_2$ are equivalent. We say that a PDAG $\G_1$ is \emph{contained in} another PDAG $\G_2$ if they have the same skeleton and every directed edge in $\G_2$ is also in $\G_1$. We say that $\K_1$ is more \emph{informative} than $\K_2$ if the MPDAG relative to $\K_1$ is contained in the MPDAG relative to $\K_2$ 

\subsection{Tiered background knowledge}

In this work we focus on background knowledge about tiered structures.

\begin{definition}[Partial tiered ordering]
Let $\G$ be a PDAG with node set $\mathbf{V}$ of size $p$, and let $T\in\mathbb{N}$, $T \leq p$. A  (partial) tiered ordering of the nodes in $\mathbf{V}$ is a map $\tau: \mathbf{V}\mapsto \{ 1,\ldots ,T\}^p$ that assigns each node $V\in\mathbf{V}$ to a unique tier $t\in\{ 1,\ldots ,T\}$.
\end{definition}

A tiered ordering is partial if multiple nodes are assigned to the same tier. The following properties are implied by the definition and reflect what kind of background knowledge is encoded in a tiered ordering:
\begin{itemize}[align=left]
    \item[(Uniqueness)] A node is assigned to no more than one tier.
    \item[(Completeness)] Every node belongs to a tier.
    \item[(Transitivity)] If $\tau (A)\leq\tau (B)$ and $\tau (B)\leq\tau (C)$ then $\tau (A)\leq\tau (C)$.
\end{itemize}

A tiered ordering imposes background knowledge on a graph by demanding that no directed edges point from later tiers into  earlier tiers, i.e. specifying the forbidden edges accordingly  $\F=\{\{A\leftarrow B\}: \tau(A)<\tau(B), A,B\in\V\}$. Thus,  a tiered ordering provides information on the absence of ancestral relations, but not on their presence. Combining tiered knowledge with a PDAG, it might be possible to construct some ancestral relations which allow us to orient undirected edges as illustrated in Example \ref{smallexample}. 

\begin{example}\label{smallexample}
 Assume that we are given $\V=\{ A, B\}$ and tiered ordering $\tau$ with $\tau (A)<\tau (B)$. This corresponds to the background knowledge $\K$ with forbidden set $\F =\{A\leftarrow B\}$ and no required edges: In this case it could be possible that $A$ is a parent of $B$ or that there is no edge between them. However, if we additionally knew that $A$ and $B$ are adjacent, then our background knowledge would result in the edge orientation $\{A\rightarrow B\}$. 
\end{example}

As illustrated, the cross-tier edges play an important role and are defined as follows.

\begin{definition}[Cross-tier edge]
Let $\G=(\V,\E)$ be a PDAG and $\tau$ a tiered ordering of $\V$. An edge $\{A\rightarrow B\}\in\E$ is a cross-tier edge (relative to $\tau$) if $\tau(A)<\tau(B)$.
\end{definition}

With tiered knowledge $\tau$, all cross-tier edges of a PDAG will be directed. Conversely, $A-B$ only occurs if $\tau(A)=\tau(B)$. Since a tiered ordering $\tau$ of a node set $\V$ unambiguously implies a forbidden edge set we can refer to the MPDAG obtained from a CPDAG relative to a tiered ordering $\tau$, rather than referring to the forbidden edges implied by $\tau$. 
In view of Assumption \ref{assumptionknowledge} and \ref{assumptionCPDAG}, any tiered ordering that does not contradict the directed edges of the CPDAG will be consistent; to establish that there is no contradiction we therefore  only need to verify the cross-tier edges.

We will refer to MPDAGs relative to exclusively tiered  background knowledge as `tiered MPDAGs'. This is in contrast to general MPDAGs, which can arise from any kind of background knowledge. 

\begin{example}[Equivalence class restricted by tiered ordering] \label{example:constructTMPDAG}

\begin{figure}[!htbp]
\centering
\begin{tikzpicture}[state/.style={thick}]

\node[state] (x1) at (0,1.25) {$A$};
\node[state] (x2) at (0,0) {$B$};
\node[state] (x3) at (2.25,1.875) {$C$};
\node[state] (x4) at (2.25,0.625) {$D$};
\node[state] (x5) at (2.25,-0.625) {$E$};
\node[state] (x6) at (4.5,1.25) {$F$};
\node[state] (x7) at (4.5,0) {$G$};

\node (t0) at (0,-1.35) {$\tau = 1$};
\node (t1) at (2.25,-1.35) {$\tau = 2$};
\node (t2) at (4.5,-1.35) {$\tau = 3$};

\tikzset{dir/.style = {->, -{To[length=6, width=7]}, thick}}
\draw[dir]
(x1) edge (x2)
(x1) edge [bend left] (x3)
(x2) edge [bend right] (x5)
(x3) edge (x4)
(x4) edge (x5)
(x3) edge [bend left] (x6)
(x6) edge (x7)
;

\draw[dashed] 
(1.125,2.625) -- (1.125,-1.375)
(3.375,2.625) -- (3.375,-1.375)
;

\end{tikzpicture}
\caption{DAG $\D=(\V,\E)$ with tiered ordering $\tau$ and three tiers: $A$ and $B$ are assigned to tier 1, while $C$, $D$ and $E$ are assigned to tier 2, and $F$ and $G$ are assigned to tier 3.}
\label{fig.dag1}
\end{figure}
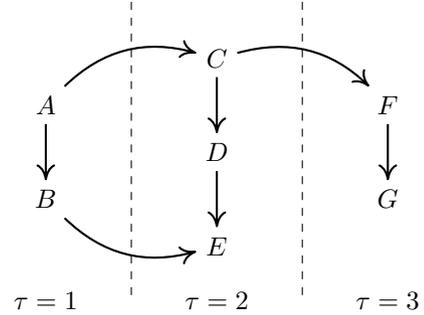

Figure \ref{fig.dag1} shows a DAG $\D$ and a tiered ordering $\tau$ of the nodes in $\D$. The differences between the equivalence class of $\D$ and the restricted equivalence class of $\D$ relative to  $\tau$ are illustrated in Figure \ref{fig.est1}. The CPDAG $\C$ represents the equivalence class of $\D$; as only two out of seven edges are directed the conditional independencies alone do not contain much causal information. Meanwhile, the restricted equivalence class represented by tiered MPDAG $\G$ relative to $\tau$  is much smaller, here six out of seven edges are oriented. $\G$ will naturally contain the same adjacencies and v-structures as $\C$,  the dashed cross-tier edges $A\rightarrow C$ and $C\rightarrow F$ are implied by the forbidden directions, and the dotted edges $C\rightarrow D$ and $F\rightarrow G$ are consequences of Meek's 1st rule which prohibits new v-structures. 
Due to these last additionally implied orientations of previously undirected edges, restricted equivalence classes given tiered background knowledge might be even smaller, and thus more informative, than one might initially expect.

\begin{figure}[!htbp]
\centering
\begin{tikzpicture}[state/.style={thick}]

\node[state] (x1') at (0,1.25) {$A$};
\node[state] (x2') at (0,0) {$B$};
\node[state] (x3') at (2.25,1.875) {$C$};
\node[state] (x4') at (2.25,0.625) {$D$};
\node[state] (x5') at (2.25,-0.625) {$E$};
\node[state] (x6') at (4.5,1.25) {$F$};
\node[state] (x7') at (4.5,0) {$G$};

\node[state] (x1) at (0,4.75) {$A$};
\node[state] (x2) at (0,3.5) {$B$};
\node[state] (x3) at (2.25,5.375) {$C$};
\node[state] (x4) at (2.25,4.125) {$D$};
\node[state] (x5) at (2.25,2.875) {$E$};
\node[state] (x6) at (4.5,4.75) {$F$};
\node[state] (x7) at (4.5,3.5) {$G$};

\node (t1) at (6, 4.5) {CPDAG $\C$};
\node (t2) at (6, 1) {MPDAG $\G$};

\tikzset{undir/.style = {-, thick}}
\tikzset{dir/.style = {->, -{To[length=6, width=7]}, thick}}
\draw[dir]
(x2) edge [bend right] (x5)
(x4) edge (x5)
(x1') edge [bend left] [dashed] (x3')
(x2') edge [bend right] (x5')
(x3') edge [dashed] [dotted] (x4')
(x4') edge (x5')
(x3') edge [bend left] [dashed] (x6')
(x6') edge [dotted] (x7')
;
\draw[undir]
(x1) edge (x2)
(x1) edge [bend left] (x3)
(x3) edge (x4)
(x3) edge [bend left] (x6)
(x6) edge (x7)
(x1') edge (x2')
;

\end{tikzpicture}
\caption{The CPDAG $\C$ representing the equivalence class of $\D$, and the tiered MPDAG $\G$ representing the restricted equivalence class of $\D$ relative to $\tau$.}
\label{fig.est1}
\end{figure}
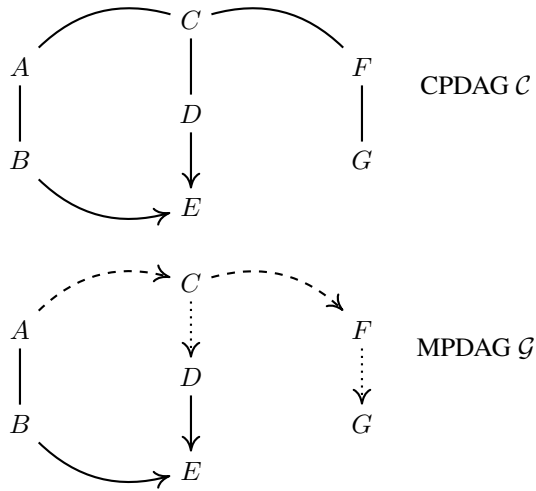

\end{example}

\section{Properties of tiered MPDAGs}\label{sec:properties}

When incorporating general background knowledge into a CPDAG, it might be necessary to apply all of Meek's rules 1-4 in order to obtain a maximally informative graph. Meek's 1st rule ensures that no new v-structures are created, while rules 2-4 all concern preventing  directed cycles. By construction,  tiered knowledge imposes an ordering of the nodes; using that this ordering is transitive and complete, the following lemma shows that Meek's 1st rule is sufficient to construct a maximally informative graph. This is a strong result, which will help us prove further results in this and the following sections.
 
\begin{lemma}\label{mainlemma}
Let $\C=(\mathbf{V},\mathbf{E})$ be a CPDAG and let $\tau$ be a tiered ordering of the nodes $\V$. Let $\C^\tau$ be the PDAG obtained according to Algorithm \ref{alg:ck} and let $\G$ be the PDAG obtained by repeatedly orienting edges in $\C^\tau$ according to Meek's rule 1 until no further change occurs. Then $\G$ is the MPDAG obtained from $\C$ relative to $\tau$. 
\end{lemma}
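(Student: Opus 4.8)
The plan is to reduce the claim to showing that the graph $\G$ produced by exhaustively applying Meek's rule~1 to $\C^\tau$ is already closed under \emph{all} of Meek's rules. Indeed, the MPDAG relative to $\tau$ is the unique graph obtained by closing $\C^\tau$ under rules~1--4, and since $\G$ is reached from $\C^\tau$ by rule-1 steps alone, this same MPDAG is also the closure of $\G$ under rules~1--4; hence if no rule can be applied to $\G$ then $\G$ equals the MPDAG. Rule~1 is inapplicable to $\G$ by construction, so the entire content is to rule out rules~2, 3 and~4.

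Before that I would record three facts used throughout. (a) \emph{Tier invariant.} Algorithm~\ref{alg:ck} directs every edge between nodes of different tiers, from the earlier to the later tier, and rule~1 only ever directs an edge that was previously undirected, whose endpoints therefore share a tier; orientations are never undone. Consequently, in $\G$ every undirected edge joins two nodes of the same tier and there is no arrow pointing from a later tier to an earlier one. (b) \emph{Soundness.} Every orientation in $\G$ holds in every DAG of the restricted equivalence class (for Algorithm~\ref{alg:ck} because such DAGs respect the forbidden set; for rule~1 by Meek's soundness argument), so $\G$ contains no directed cycle; moreover $\G$ retains the directed edges of $\C$ and, since its arrows are common to all those DAGs and those DAGs all carry the v-structures of $\C$, the graph $\G$ has exactly the skeleton and the v-structures of $\C$. (c) \emph{Chain-graph structure of $\C$.} As a CPDAG, $\C$ has no partially directed cycle; in particular, if $P\to Q$ is an edge of $\C$ then there is no undirected path between $P$ and $Q$ in $\C$, i.e.\ they lie in distinct chain components.

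Then I would eliminate rules~2, 3 and~4 \emph{in that order}, using the earlier ones as lemmas. For rule~2, suppose $\G$ contains $A\to C$, $C\to B$ and $A-B$. By the tier invariant $\tau(A)=\tau(B)$, and then $\tau(A)\le\tau(C)\le\tau(B)$, so $A,B,C$ share a tier and neither directed edge came from Algorithm~\ref{alg:ck}. If $A\to C$ and $C\to B$ were both already in $\C$, closure of $\C$ under rule~2 would give $A\to B$ in $\C$, contradicting $A-B$ in $\G$. If instead $A\to C$ was oriented by rule~1, there is an $X$ with $X\to A$ in $\G$ and $X$ not adjacent to $C$, while $A-C$ was undirected in $\C$; since $\G$ is rule-1-closed and contains $A-B$, the node $X$ must be adjacent to $B$; if $B\to X$ then $A\to C\to B\to X\to A$ is a directed cycle in $\G$, impossible by (b); if $X\to B$ then $X\to B\leftarrow C$ is a v-structure of $\G$, hence of $\C$, so $C\to B$ is in $\C$, but then $A-C$, $A-B$ in $\C$ force $A,B,C$ into one chain component while $C\to B$ separates $B$ and $C$ in $\C$ by (c) --- a contradiction. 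The only remaining case, $A\to C$ in $\C$ and $C\to B$ from rule~1, puts $A\to C$, $A-B$ and $C-B$ together in $\C$, again a chain-component contradiction. Rules~3 and~4 are handled by the same scheme: their undirected premises collapse all nodes involved into a single tier, and a case analysis over which premise edges are original and which were produced by rule~1 --- chasing, for each rule-1 edge, the parent that triggered it --- closes every branch through a fresh applicable rule-1 configuration on $\G$ (excluded because $\G$ is rule-1-closed), a directed cycle (excluded by (b)), a v-structure absent from $\C$ (excluded by (b)), or a chain-component contradiction in $\C$ (excluded by (c)); in the branches where a length-two directed path runs into an undirected edge one appeals instead to the already-proved fact that rule~2 does not fire on $\G$, and for rule~4 also that rule~3 does not.

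The main obstacle is the combinatorial bookkeeping for rules~3 and especially~4: rule~4 has the richest premise, so the split over which of its edges are original versus rule-1-derived, together with the adjacencies of the triggering parents to the remaining nodes, generates many branches, and one must check that each genuinely terminates using only soundness, the v-structure and chain-graph structure of $\C$, and the non-applicability of rules~2 and~3. Verifying that this dependency order is acyclic --- that the argument for rule~4 never circularly needs rule~4 --- is the point that requires the most care; everything else is routine.
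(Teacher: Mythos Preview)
Your approach is essentially the paper's own: both arguments show that the rule-1 closure of $\C^\tau$ is already closed under rules 2--4 by tracing back rule-1 triggers and deriving contradictions from the tier invariant, the chain-graph structure of $\C$, and soundness; the paper organises this as a first pass on $\C^\tau$ and a second on $\C^{\tau,n}$, whereas you work directly on $\G$ with the invariants (a)--(c) stated up front, and your observation in (b) that $\G$ has exactly the v-structures of $\C$ dispatches rule~3 in one line. One small omission in your rule-2 case analysis: after establishing that $X$ is adjacent to $B$ you treat $B\to X$ and $X\to B$ but not $X-B$; that sub-case closes immediately since $C\to B - X$ with $C,X$ non-adjacent would fire rule~1 on $\G$.
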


Note that so far we have taken a given CPDAG as the starting point to which tiered background knowledge is added. Alternatively, we can start at an earlier stage with a PDAG $\G$ that contains directed edges only if they belong to v-structures with other edges being undirected. In this case, the tiered ordering can be incorporated into $\G$ by orienting the cross-tier edges and then applying all four of Meeks rules to achieve maximality. Lemma \ref{mainlemma} therefore highlights the extra orientations implied by tiered knowledge on top of the usual orientations.

While general MPDAGs might contain partially directed cycles, it turns out that when background knowledge arises from tiered structures the completeness and transitivity of tiered orderings ensure that no partially directed cycles can occur:

\begin{theorem}\label{theorem:cycles}
Let $\G=(\V,\E)$ be a tiered MPDAG, then $\G$ does not have any partially directed cycles.
\end{theorem}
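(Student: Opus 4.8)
The plan is a proof by contradiction: take a shortest partially directed cycle of $\G$, shrink it to a single unavoidable triangle, and then rule that triangle out. Two facts will be used throughout. First, by the construction of tiered MPDAGs (Section~\ref{sec:background}) every undirected edge of $\G$ joins two nodes of the same tier and every directed edge $A\to B$ satisfies $\tau(A)\le\tau(B)$; hence, traversing any partially directed cycle along its semi-directed orientation the tier index is non-decreasing at each edge, and, as the traversal returns to its start, it must be constant --- so every partially directed cycle of $\G$ lies inside a single tier. Second, by Lemma~\ref{mainlemma} together with the definition of an MPDAG, $\G$ is closed under all of Meek's rules~1--4. Now suppose partially directed cycles exist and fix a shortest one $\gamma=(V_1,\dots,V_k=V_1)$; then $k\ge 3$, and $\gamma$ is not fully directed (a directed cycle is impossible in the acyclic $\G$), so $\gamma$ contains both a directed and an undirected edge, and walking around it we may choose an index $i$ with $V_{i-1}\to V_i$ and $V_i-V_{i+1}$.

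Next I would case-split on the edge, if any, between $V_{i-1}$ and $V_{i+1}$. If they are non-adjacent, closure under Meek's rule~1 forces $V_i\to V_{i+1}$, contradicting $V_i-V_{i+1}$. If $V_{i+1}\to V_{i-1}$, then $V_{i+1}\to V_{i-1}\to V_i$ together with $V_i-V_{i+1}$ triggers Meek's rule~2 and forces $V_{i+1}\to V_i$, again contradicting $V_i-V_{i+1}$. If $V_{i-1}\to V_{i+1}$, then for $k\ge 4$ contracting the sub-path $V_{i-1}\to V_i-V_{i+1}$ to this single (forward-traversed) edge yields a strictly shorter partially directed cycle, contradicting minimality, while for $k=3$ this case cannot arise because the triangle on $\{V_{i-1},V_i,V_{i+1}\}$ would then admit no semi-directed traversal. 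The only surviving case is $V_{i-1}-V_{i+1}$: $\G$ contains a triangle $A\to B-C-A$ whose three nodes share a tier, which I will call a \emph{bad triangle}. The theorem thus reduces to showing that tiered MPDAGs contain no bad triangle --- and this is the step I expect to be the genuine obstacle, since a bad triangle is invisible to every Meek rule, so one must instead trace \emph{why} the edge $A\to B$ came to be directed.

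For that step I would induct on the order in which within-tier edges become directed as $\G$ is built from $\C^\tau$ by Meek's rule~1, assigning order $0$ to within-tier edges already directed in $\C$ (the orientation step from $\C$ to $\C^\tau$ creates only cross-tier directed edges). Among all bad triangles $\{A,B,C\}$ with directed edge $A\to B$, take one whose $A\to B$ has least order. If that order is $0$, then $B-C$ and $C-A$, being undirected in $\G$ hence within-tier hence undirected already in $\C$, make $A\to B-C-A$ a partially directed cycle in the CPDAG $\C$, contradicting that CPDAGs have none \citep{andersson1997}. Otherwise $A\to B$ was produced by Meek's rule~1 from a configuration $W\to A$, $A-B$, $W\not\sim B$, in which $W\to A$ has strictly smaller order and $W\not\sim B$ persists in $\G$; inspecting how $W$ relates to $C$ in $\G$ then gives a contradiction in every case: $W\not\sim C$ violates rule-1 closure via $W\to A$, $A-C$; $W\to C$ violates rule-1 closure via $W\to C$, $C-B$, $W\not\sim B$; $C\to W$ violates rule-2 closure via $C\to W\to A$, $C-A$; and $W-C$ exhibits a fresh bad triangle $\{W,A,C\}$ whose directed edge $W\to A$ has smaller order, contradicting minimality. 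Hence $\G$ has no bad triangle, and therefore no partially directed cycle. The shrinking of $\gamma$ in the first two steps is routine bookkeeping about Meek's rules; the real content sits in this last induction, where the completeness and transitivity of $\tau$ enter both through the chain-graph structure of $\C$ at order $0$ and through the termination of the regress on orientation order.
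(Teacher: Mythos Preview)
Your argument is correct and takes a genuinely different route from the paper's proof. The paper works directly on the sequence $\C \to \C^\tau \to \C^{\tau,1} \to \cdots \to \G$: it first shows, using transitivity of $\tau$, that $\C^\tau$ has no partially directed cycle, and then shows that iterated applications of Meek's rule~1 cannot create one, by chasing adjacencies along the unshielded path that triggered the rule until a contradiction emerges. You instead front-load the structural work: the shortest-cycle reduction together with closure of $\G$ under Meek's rules~1 and~2 collapses the problem to a single bad triangle $A\to B-C-A$, and the induction on orientation order dispatches that triangle cleanly. Your decomposition is more modular and arguably easier to reuse; the paper's argument stays closer to the construction and never invokes rule~2 (in keeping with Lemma~\ref{mainlemma}), whereas you use rule~2 twice --- which is legitimate, since $\G$ is an MPDAG and hence closed under all four rules regardless of which were actually needed to build it. Two small wrinkles worth tightening: when you shortcut via the chord $V_{i-1}\to V_{i+1}$ the shorter cycle may end up fully directed rather than partially directed (still a contradiction, now to acyclicity of $\G$); and the edge $W\to A$ need not carry an ``order'' in your sense if $W$ lies in an earlier tier, but this only matters in the sub-case $W-C$, where $\tau(W)=\tau(C)=\tau(A)$ forces $W$ into the common tier anyway.
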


The implications of Theorem \ref{theorem:cycles}  allow us to work with tiered MPDAGs in a similar way as  with CPDAGs. The same does not hold for MPDAGs in general  \citep{perkovic2017}; we will elaborate on this in the section below.

It was shown by \cite{andersson1997} that CPDAGs are chain graphs with chordal chain components, which is useful for many purposes. Given Theorem \ref{theorem:cycles} it becomes straightforward to show that the the same holds for tiered MPDAGs:

\begin{corollary} \label{corollary:chain}
Let $\G=(\V,\E)$ be a tiered MPDAG, then $\G$ is a chain graph with chordal chain components.
\end{corollary}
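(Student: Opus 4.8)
The plan is to obtain both halves of the statement from Theorem~\ref{theorem:cycles}, together with the elementary observation that passing from $\C$ to $\G$ (Algorithm~\ref{alg:ck} followed by Meek's rules) neither adds nor deletes edges and only ever turns an undirected edge into a directed one; in particular $\G$ and $\C$ have the same skeleton. First, that $\G$ is a chain graph is almost a restatement of Theorem~\ref{theorem:cycles}: $\G$ is a PDAG, hence has no fully directed cycle, and Theorem~\ref{theorem:cycles} rules out partially directed cycles, which is exactly the defining property of a chain graph. Its chain components are the connected components of the subgraph spanned by the undirected edges of $\G$, so it remains to show that each of these is chordal.

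For chordality I would argue by contradiction: suppose a chain component $\mathcal{T}$ of $\G$ contains a chordless cycle $V_1 - V_2 - \cdots - V_k - V_1$ with $k \geq 4$, all of whose edges are undirected in $\G$. I would then transfer this cycle to the CPDAG $\C$. Each edge $V_i - V_{i+1}$ is undirected in $\G$, hence was already undirected in $\C$. Moreover, for any two non-consecutive cycle vertices $V_i, V_j$, I claim they are nonadjacent in $\G$: they cannot be joined by an undirected edge (the cycle is chordless in $\mathcal{T}$), and if they were joined by a directed edge, say $V_i \to V_j$, then this edge together with the undirected sub-path of the cycle running from $V_j$ back to $V_i$ would constitute a partially directed cycle in $\G$, contradicting Theorem~\ref{theorem:cycles}. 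Since $\G$ and $\C$ share the same skeleton, $V_i$ and $V_j$ are also nonadjacent in $\C$. Hence $V_1 - \cdots - V_k - V_1$ is a chordless cycle of length $\geq 4$ contained in a single chain component of $\C$, contradicting the result of \citet{andersson1997} that the chain components of a CPDAG are chordal. This contradiction shows that every chain component of $\G$ is chordal.

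The step I expect to be the crux is the claim that non-consecutive cycle vertices are nonadjacent in $\G$: a ``chord'' present in $\C$ need not remain a chord of the cycle inside the chain component $\mathcal{T}$, since it may have been oriented during the construction of $\G$, and it is precisely Theorem~\ref{theorem:cycles} that excludes this possibility. This is also exactly where the argument fails for general, non-tiered MPDAGs, which can have non-chordal chain components. Everything else is bookkeeping about Algorithm~\ref{alg:ck} and the shape of Meek's rules; notably, Lemma~\ref{mainlemma} is not needed for this corollary.
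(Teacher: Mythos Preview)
Your proposal is correct and follows essentially the same approach as the paper's own proof: both obtain the chain-graph part directly from Theorem~\ref{theorem:cycles}, and for chordality both argue that a chordless undirected cycle of length $\geq 4$ in $\G$ would have to come from an undirected cycle in $\C$, with any potential chord ruled out in $\G$ by Theorem~\ref{theorem:cycles} (a directed chord would close a partially directed cycle), contradicting the chordality of CPDAG chain components from \citet{andersson1997}. Your write-up is simply more explicit than the paper's compressed version, and your remark that Lemma~\ref{mainlemma} is not needed here is accurate.
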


Similarly, \citet{wang2022sound}  obtain graphs with chordal  undirected components under a different type of background knowledge:  
Local background knowledge for a node $A\in\V$ is defined as the knowledge of whether $A$ is a cause of $V$, for each $V\in\adj{}{A}$. Although not explicit, this induces a transitivity among the nodes in $\adj{}{A}$.

\subsection{Interpretation of undirected paths}\label{sec:interpret}

In a CPDAG $\C=(\V,\E)$, an undirected path $\pi$ between two nodes $A,B\in\V$ indicates that there exist a DAG $\D_1\in[\C]$ and another DAG $\D_2\in[\C]$  such that $A$ is an ancestor of $B$ in $\D_1$ and $B$ is an ancestor of $A$ in $\D_2$. This is not necessarily the case in a general MPDAG $\G=(\V,\E')$: If there is an edge $A\rightarrow B$ in $\G$ not on $\pi$, i.e. $\G$ has a partially directed cycle, then there exists a DAG $\D_1$ represented in the class by $\G$ in which the path corresponding to $\pi$ is directed from $A$ to $B$, but there cannot be a DAG $\D_2$ in the class represented by $\G$ in which $B$ is an ancestor of $A$ since this would create a cycle. Hence, an undirected path between two nodes in an MPDAG does not necessarily mean that the path can be directed either way. Hence it is  necessary to check multiple paths in the graph in order to determine whether one path can be directed. Since partially directed cycles do not occur in tiered MPDAGs, the above issue does not occur and, hence, the interpretation of undirected paths in tiered MPDAGs is the same as in CPDAGs. 

\subsection{Adjustment in tiered MPDAGs}\label{sec:adjust}

The \emph{generalised adjustment criterion} \citep{perkovic2018complete} determines whether a set of nodes in a CPDAG constitutes a valid adjustment set in every DAG in the equivalence class. This criterion checks for possibly causal paths, i.e. paths for which there are DAGs in the equivalence in which these paths are causal. Hence, any undirected path is possibly causal. However, as described in Section \ref{sec:interpret}, this interpretation does not hold for general MPDAGs, and the notion of possibly causal does not transfer directly from CPDAGs to MPDAGs. In order to tackle this issue, \citet{perkovic2017} introduced the notion of \emph{b-possibly causal} paths, which is a stronger requirement but ensures that there are in fact paths in the equivalence class that are causal. With this notion an adjustment criterion for general MPDAGs, the \emph{b-adjustment criterion}, can be given \citep{perkovic2017}. To determine whether a path is b-possibly causal, one needs to check multiple paths in the graph, which can be computationally heavy for large and dense graphs. For tiered MPDAGs, the definition of b-possibly causal paths simplifies to the definition of possibly causal paths, and the generalised adjustment criterion for CPDAGs is valid for tiered MPDAGs as well:

\begin{corollary}\label{corollary:possibly}
    Let $\G=(\V,\E)$ be a tiered MPDAG, and let $\mathbf{X}, \mathbf{Y},\mathbf{Z}\subseteq\V$ be pairwise disjoint node sets. Then $\mathbf{Z}$ satisfies the generalised adjustment criterion relative to $(\mathbf{X}, \mathbf{Y})$ in $\G$ if and only if it satisfies the b-adjustment criterion relative to $(\mathbf{X}, \mathbf{Y})$ in $\G$.
\end{corollary}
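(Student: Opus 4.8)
The plan is to reduce the statement to a single fact about paths. The generalised adjustment criterion of \citet{perkovic2018complete} and the b-adjustment criterion of \citet{perkovic2017} have the same logical shape: both ask that $\mathbf{Z}$ be disjoint from a forbidden set of nodes and that $\mathbf{Z}$ block every relevant proper path from $\mathbf{X}$ to $\mathbf{Y}$. The only place they differ is in the primitive notion used to build these objects: the generalised criterion is phrased via \emph{possibly causal} paths (hence via possible descendants and the forbidden set $\mathrm{forb}(\mathbf{X},\mathbf{Y},\G)$), whereas the b-adjustment criterion is phrased via \emph{b-possibly causal} paths (hence via the corresponding ``b''-versions of those sets). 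So it suffices to show that in a tiered MPDAG the notions of possibly causal path and b-possibly causal path coincide; the two criteria then match clause by clause.

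The key step I would prove is: in a tiered MPDAG $\G$, a path $p$ from a node $A$ to a node $B$ is possibly causal if and only if it is b-possibly causal. One direction is immediate from the definitions, since being b-possibly causal is by construction at least as strong as being possibly causal. For the converse, suppose $p=\langle V_0,\dots,V_k\rangle$ with $V_0=A$ and $V_k=B$ is possibly causal but not b-possibly causal. Unwinding the definition in \citet{perkovic2017}, the failure of b-possible causality is witnessed by a directed edge of $\G$ that short-circuits a sub-path of $p$ against its causal direction: there are indices $i<j$ such that the sub-path $p(V_i,V_j)$ carries no arrowhead pointing back towards $V_i$ while $V_j\rightarrow V_i$ is an edge of $\G$. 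But then $p(V_i,V_j)$ closed by the edge $V_j\rightarrow V_i$ is a partially directed cycle in $\G$, contradicting Theorem \ref{theorem:cycles}. Hence every possibly causal path in a tiered MPDAG is b-possibly causal.

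Granting the key step, the downstream objects collapse onto one another: ``$W$ is a possible descendant of $V$'' and ``$W$ is a b-possible descendant of $V$'' become equivalent (apply the key step to the witnessing path), so the forbidden sets in the two criteria coincide, and the set of proper b-possibly causal paths from $\mathbf{X}$ to $\mathbf{Y}$ equals the set of proper possibly causal paths from $\mathbf{X}$ to $\mathbf{Y}$, so the paths required to be blocked are the same. Substituting these equalities into the definition of the b-adjustment criterion turns it, verbatim, into the generalised adjustment criterion, which is exactly the asserted equivalence.

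I expect the main obstacle to be in the second step: stating precisely the configuration that certifies ``not b-possibly causal'' in the language of \citet{perkovic2017} and checking that it really is a partially directed cycle in the sense used for Theorem \ref{theorem:cycles} --- in particular that the relevant sub-path has no arrowhead directed back towards its initial node, so that appending $V_j\rightarrow V_i$ genuinely yields a partially directed (not merely partially oriented) cycle. There is also some routine bookkeeping for set-valued $\mathbf{X}$ and $\mathbf{Y}$ (properness, endpoints, and the inclusion of $\mathbf{X}$ itself in the forbidden set), but once the path-level equivalence is in place this is purely formal and Theorem \ref{theorem:cycles} carries all the real content.
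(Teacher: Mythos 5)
Your proposal is correct and follows essentially the same route as the paper: the paper likewise reduces the corollary to showing that possibly causal and b-possibly causal paths coincide in a tiered MPDAG, and proves the nontrivial direction by observing that an edge $V_j\rightarrow V_i$ shortcutting a possibly causal subpath $\pi(V_i,V_j)$ would create a partially directed cycle, contradicting Theorem~\ref{theorem:cycles}.
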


A related result is shown in \cite{van2016separators}: They introduce a class of graphs called \emph{restricted chain graphs}, which are chain graphs with (1) chordal chain components, and (2) no unshielded triples of the form $A\rightarrow B-C$. They give a sound and complete adjustment criterion for graphs of this type, and they provide an algorithm to find adjustment sets. Clearly, a tiered MPDAG is a type of restricted chain graph, and the results of \cite{van2016separators} hold for tiered MPDAGs.

\subsection{IDA for tiered MPDAGs}

Covariate adjustment in a CPDAG (MPDAG) requires an adjustment set to be a valid in every DAG in the (restricted) equivalence class. The IDA-algorithm \citep{maathuis2009estimating}, instead, finds an adjustment set for each DAG in the class. Enumerating all DAGs in an equivalence class is a computationally heavy task, but it can be done in polynomial time for chain graphs with chordal chain components \citep{wienobst2021polynomial}, hence also for tiered MDPAGs.

The local IDA-algorithm utilises the fact that if a valid adjustment set exists, then the parent set is always valid \citep{pearl2009causality} and considers the possible parents, i.e. all sets that are parent sets in some DAG in the equivalence class. However, as general MPDAGs can contain partially directed cycles, it cannot be verified locally whether a set of nodes is a possible parent set, and a semi-local version  was introduced to tackle this \citep{perkovic2017}. The joint IDA-algorithm  determines the joint parent sets semi-locally by orienting subgraphs \citep{nandy2017estimating}. Similar to the local IDA-algorithm, this approach fails for general MPDAGs due to potential partially directed cycles. To tackle this issue \citet{perkovic2017} introduced an additional step to check whether the oriented subgraphs are valid. In contrast, for tiered MPDAGs no additional steps are needed, and the original local and joint IDA both remain valid:

\begin{corollary}\label{corollary:ida}
    Let $\G=(\V,\E)$ be a tiered MPDAG. Let $\mathbf{PA}_{\G}(X)$ denote the multiset of parent sets of $X$ in all DAGs represented by $\G$, and let $\mathbf{PA}^{\mathrm{local}}_{\G}(X)$ denote the multiset of parent sets of $X$ obtained from the local IDA algorithm. Then $\mathbf{PA}_{\G}(X)$ and $\mathbf{PA}^{\mathrm{local}}_{\G}(X)$ contain the same distinct elements. Moreover, let $\mathbf{PA}^{\mathrm{joint}}_{\G}(X)$ denote the multiset of parent sets of $X$ obtained from the joint IDA algorithm. Then $\mathbf{PA}_{\G}(X)$ and $\mathbf{PA}^{\mathrm{joint}}_{\G}(X)$ contain the same distinct elements and the ratios of multiplicities of any two elements are the same.
\end{corollary}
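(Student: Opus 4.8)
The plan is to reduce everything to the two structural facts already proved for tiered MPDAGs --- absence of partially directed cycles (Theorem~\ref{theorem:cycles}) and being a chain graph with chordal chain components (Corollary~\ref{corollary:chain}) --- and then to observe that these are precisely the properties of CPDAGs that the correctness proofs of the local IDA \citep{maathuis2009estimating} and joint IDA \citep{nandy2017estimating} algorithms use. The extra steps of \citet{perkovic2017} for general MPDAGs exist solely to cope with partially directed cycles, so for tiered MPDAGs they are vacuous and the original algorithms apply unchanged. I would split the argument into completeness, soundness, and the multiplicity bookkeeping.

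For completeness, let $\D$ be a DAG represented by $\G$, i.e.\ a consistent extension: same skeleton, same v-structures, all directed edges of $\G$ retained, acyclic. Then $\pa{\D}{X}=\pa{\G}{X}\cup S$ with $S\subseteq\nb{\G}{X}$, and $S$ is a clique of the chain component $G_X$ of $X$: two non-adjacent members $a,b\in S$ would yield $a\to X\leftarrow b$ in $\D$, which is not a v-structure of $\G$ (there $a-X$ and $b-X$ are undirected), contradicting that $\D$ and $\G$ share their v-structures. Such an $S$ passes the local v-structure test of the IDA algorithms --- note that closure of $\G$ under Meek's rule~1 forces every undirected neighbour of $X$ to be adjacent to every directed parent of $X$, so a new v-structure at $X$ can only arise from within $S$. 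Hence $\pa{\D}{X}\in\mathbf{PA}^{\mathrm{local}}_{\G}(X)$ and, by the same reading off from an oriented chain component, $\pa{\D}{X}\in\mathbf{PA}^{\mathrm{joint}}_{\G}(X)$.

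Conversely, for soundness and the multiplicities, given a locally valid $S$ I would exhibit a DAG in $[\G]$ realising $\pa{\G}{X}\cup S$: orient $G_X$ as a moral DAG whose in-neighbours of $X$ are exactly $S$ (possible since $S\cup\{X\}$ is a clique of the chordal graph $G_X$, cf.\ \citet{andersson1997}), orient every other chain component as a moral DAG, and keep all cross-component edges of $\G$. This graph has the skeleton and the directed edges of $\G$; it is acyclic because $\G$ is a chain graph with no partially directed cycle --- the step that fails for general MPDAGs and where Theorem~\ref{theorem:cycles} is indispensable; and it has exactly the v-structures of $\G$, since every directed edge of $\G$ is cross-component (hence kept), within-component orientations are moral, and Meek's rule~1 forbids a new v-structure straddling a directed edge of $\G$. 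So it lies in $[\G]$ with $\pa{}{X}=\pa{\G}{X}\cup S$. Together with completeness this shows $\mathbf{PA}_{\G}(X)$, $\mathbf{PA}^{\mathrm{local}}_{\G}(X)$ and $\mathbf{PA}^{\mathrm{joint}}_{\G}(X)$ have the same distinct elements. For the multiplicities, a DAG in $[\G]$ is obtained by choosing a moral acyclic orientation of each chain component independently, and $\pa{}{X}$ depends only on the orientation of $G_X$; hence the number of DAGs in $[\G]$ with a given $\pa{}{X}$ equals the number of such orientations of $G_X$ --- exactly what joint IDA records --- times the constant $\prod_{j}(\text{number of moral orientations of }G_j)$ over the remaining components. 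This constant cancels in any ratio, giving the claim; local IDA lists each configuration once, which is why only distinct elements are asserted for it.

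The step I expect to be the main obstacle is making the soundness construction fully rigorous: one must argue that keeping \emph{all} cross-component directed edges of $\G$ simultaneously --- both the compelled ones and those forced by the tiered ordering --- is compatible with independently chosen moral orientations of the chordal components and leaves the whole graph acyclic and v-structure-preserving. This is precisely where the argument departs from the CPDAG case and where Theorem~\ref{theorem:cycles} together with Corollary~\ref{corollary:chain} do the work; a lesser subtlety is checking that the local test used in the IDA implementations coincides, for tiered MPDAGs, with ``$S$ is a clique of $G_X$'', so that the enumeration is neither too coarse nor too fine.
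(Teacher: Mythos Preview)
Your proposal is correct and follows essentially the same route as the paper: both reduce to the structural facts that a tiered MPDAG has no partially directed cycles (Theorem~\ref{theorem:cycles}) and is a chain graph with chordal chain components (Corollary~\ref{corollary:chain}), then observe that these are exactly the properties of CPDAGs invoked in the original correctness proofs of \citet{maathuis2009estimating} and \citet{nandy2017estimating}. The paper packages the two key observations --- that no orientation of the undirected part can create a v-structure involving a directed edge (from closure under Meek's rule~1) or a cycle involving a directed edge (from Theorem~\ref{theorem:cycles}) --- into a separate lemma and then defers the remainder to the cited proofs, whereas you unpack the soundness construction and the multiplicity bookkeeping more explicitly; but the substance is the same.
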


In order to adapt the local IDA-algorithm to general MPDAGs, \citet{fang2020ida} introduced a set of local orientation rules to verify whether a set of nodes is a possible parent set of a given node, yielding a fully local version of the IDA that can handle general MPDAGs. While this reduces computation time for general MPDAGs,  it is not necessary for tiered MPDAGs due to Corollary \ref{corollary:ida}.

Finally, the optimal IDA-algorithm \citep{witte2020efficient}  is only semi-local and it includes an additional step to check other parts of the graph: This algorithm is essentially not local as the optimal adjustment set is not likely to be the parent set \citep{witte2020efficient, henckel2022graphical}. In this case, tiered MPDAGs do not have an advantage over general MPDAGs. However, the definition of the optimal adjustment set does simplify in a similar fashion as in Section \ref{sec:adjust}. A minimal version of the IDA-algorithm is proposed by \citet{guo2021minimal}; like the optimal IDA, this method is  non-local by construction, and tiered MPDAGs do not provide an advantage in this case.

\section{Comparing tiered background knowledge}\label{sec:character}

In this section, we investigate how different tiered background knowledge can be compared. This is relevant in situations where different experts are consulted or where eliciting more detailed knowledge may require more effort. It also provides insights into what kind of background knowledge is especially valuable. In case of a cohort study spanning a whole life-time, we have a clear tiered structure due to the time-ordering, but the tiers might be large, and we need to consult different experts in order to refine the tiers depending on their expertise in e.g. children’s health, life style factors, or diseases common among the elderly. This can be very costly and time consuming to obtain, and it is then beneficial to know how to prioritise. Moreover, time-ordering has the advantage of being correct, whereas other ways of motivating tiers might be less certain. The results of this section could therefore also be used at the design stage of a, say, cohort study: As our results will show, the finer we can reliably partition early variables into tiers by the design alone, the more informative it will be for causal structure learning.

Throughout, we will only compare tiered orderings that are compatible in the sense that they do not contradict each other on the ordering of the nodes; this is in line with Assumption \ref{assumptionknowledge}. Hence, the orderings can only disagree on the status of an edge being a cross-tier edge, not the direction of it.

Consider two different tiered orderings $\tau_i$ and $\tau_j$. If for all $A,B\in\V:$ $\tau_i (A)<\tau_i (B)\Rightarrow\tau_j (A)<\tau_j (B)$, then $\tau_j$ is \emph{finer} than $\tau_i$, and $\tau_i$ is \emph{coarser} than $\tau_j$. In this case we have for the respective sets of forbidden edges that $\F_i\subseteq\F_j$.

Note that a finer tiered ordering can be redundant compared to the coarser one; otherwise  it must be more informative. However, two tiered orderings can be different without one being finer or coarser than the other. They can then either be equivalent, or one can be more informative than the other, or they are incomparable.

To compare tiered orderings on a given CPDAG, we must compare the resulting tiered MPDAGs. We give a graphical criterion for their  equivalence in Section \ref{sec:equivalent}. Evidently, this provides a criterion for redundancy. More interestingly, this also provides insight into when one ordering is more informative than another as addressed in Section \ref{sec:compareorderings}.

\subsection{Equivalence of tiered MPDAGs}\label{sec:equivalent}

First, we need some further terminology: 

\begin{definition}[Earlier path]
Let $\G=(\V,\E)$ be a PDAG, let $\tau$ be a tiered ordering of the nodes $\V$, and let $\pi_1$ and $\pi_2$ be two arbitrary paths in $\G$. If $\pi_1$ contains a node $V$ with $\tau(V)<\tau(W)$ for all nodes $W$ on $\pi_2$, we say that $\pi_1$ is earlier than $\pi_2$; correspondingly,  $\pi_2$ is later than $\pi_1$. A path is earliest if it does not contain subpaths of any earlier paths.
\end{definition}

We say that an edge is \emph{fully shielded} if it does not occur on any unshielded path. Hence, $A\any B$ is a fully shielded edge in $\G$ iff $\adj{\G}{A}\backslash\{ B\}=\adj{\G}{B}\backslash\{ A\}$.

In the following, $\C_u^\tau$  refers to the graph obtained by first omitting every directed edge in a graph $\C$, and then orienting edges in $\C_u$ according to the tiered ordering $\tau$. 

\begin{theorem}\label{mainthm}
Let $\C$ be a CPDAG and let $\tau_1$ and $\tau_2$ be distinct tiered orderings. Let $\G_1$ be the tiered MPDAG obtained from $\C$ relative to $\tau_1$ and $\G_2$ the tiered MPDAG obtained from $\C$ relative to $\tau_2$. Then $\G_1=\G_2$ if and only if

\begin{itemize}
    \item[(i)] $\C^{\tau_1}_u$ and $\C^{\tau_2}_u$ agree on the first cross-tier edge on any earliest unshielded path and
    \item[(ii)] $\C^{\tau_1}_u$ and $\C^{\tau_2}_u$ agree on any fully shielded cross-tier edge.
\end{itemize}
\end{theorem}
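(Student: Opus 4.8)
The plan is to prove both directions using Lemma~\ref{mainlemma}, which tells us that each tiered MPDAG $\G_i$ is obtained from $\C^{\tau_i}$ (equivalently, since the directed edges of $\C$ are forced, from $\C^{\tau_i}_u$ together with the original v-structures) by exhaustively applying Meek's rule~1 only. So $\G_1=\G_2$ iff the two Meek-R1 closures coincide, and the whole argument reduces to tracking how an oriented cross-tier edge propagates orientations along unshielded paths. The key bookkeeping device is: Meek's rule~1 turns $A\to B-C$ into $A\to B\to C$ precisely when $A,C$ are non-adjacent, so an orientation travels exactly along \emph{unshielded} paths, and it can only be initiated at a cross-tier edge (directed edges inside a tier do not exist, and v-structures are common to both graphs). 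Hence the set of edges that end up directed in $\G_i$ is determined by which cross-tier edges $\C^{\tau_i}_u$ has and where they sit on unshielded paths.

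For the ``if'' direction I would show that conditions (i) and (ii) force the R1-closures to agree. Take any edge $e$ and suppose it is directed, say $A\to B$, in $\G_1$ but not forced the same way in $\G_2$ (or undirected in $\G_2$). Trace back the chain of R1-applications that oriented $e$ in $\G_1$ to its source: either $e$ itself is a cross-tier edge of $\C^{\tau_1}_u$, or the chain leads through an unshielded path to a cross-tier edge $u$ of $\C^{\tau_1}_u$. If that unshielded path can be taken \emph{earliest}, condition (i) says $\C^{\tau_2}_u$ has the same first cross-tier edge on it, so the same R1-propagation fires in $\G_2$ and orients $e$ the same way — contradiction. The residual case is when the relevant source cross-tier edge is only reachable through shielded structure; there one uses that a fully shielded edge cannot be the target of R1 (its endpoints have identical neighbourhoods, so no unshielded triple sits on it), which is why condition (ii) is needed precisely for fully shielded cross-tier edges, and the status of such an edge is simply read off directly from $\C^{\tau_i}_u$. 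Assembling these cases gives $\G_1=\G_2$ edge by edge.

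For the ``only if'' direction I would prove the contrapositive: if (i) fails, exhibit an edge on which $\G_1$ and $\G_2$ differ. Suppose $\pi$ is an earliest unshielded path on which $\C^{\tau_1}_u$ and $\C^{\tau_2}_u$ disagree at the first cross-tier edge. One side, say $\tau_1$, has a genuine cross-tier edge there pointing ``forward'' along $\pi$ while $\tau_2$ either has it undirected or has its first cross-tier edge later on $\pi$; then R1 propagates an orientation down the earliest unshielded path in $\G_1$ that is not matched in $\G_2$, using that ``earliest'' prevents an even-earlier R1-chain in $\G_2$ from secretly reproducing the same orientation. If instead (ii) fails, a fully shielded cross-tier edge is directed in one graph and undirected (or not cross-tier) in the other, and since R1 never acts on a fully shielded edge from either side, this disagreement survives into $\G_1\neq\G_2$ directly. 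The main obstacle I anticipate is the ``only if'' direction: one must argue that a local disagreement actually \emph{persists} through the full R1-closure rather than being healed by orientations arriving from elsewhere, and the minimality built into ``earliest'' and ``fully shielded'' is exactly the tool that rules this out — making a clean induction on the tiered order of nodes along the disputed path (together with the absence of partially directed cycles from Theorem~\ref{theorem:cycles}, which guarantees R1 cannot introduce conflicting orientations) the crux of the proof.
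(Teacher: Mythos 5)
Your proposal follows essentially the same route as the paper's proof: reduce everything to the Meek-rule-1 closure via Lemma~\ref{mainlemma}, observe that orientations can only originate at cross-tier edges and propagate along unshielded paths, use earliest-ness to rule out orientations being ``healed'' from elsewhere, and treat fully shielded edges separately since no unshielded triple can sit on them. The remaining work is only the bookkeeping you already flag (e.g.\ the case analysis over whether the corresponding path in an underlying DAG is directed or contains a source node, and the fact that a path has at most two first cross-tier edges), so the approach is sound and matches the paper's.
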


A path can have at most two first cross-tier edges. If a path contains two first cross-tier edges, the above condition requires $\C^{\tau_1}_u$ and $\C^{\tau_2}_u$ to agree on both cross-tier edges.

\begin{example}[Equivalent tiered orderings]\label{example:mainthm}

We again consider the DAG $\D$ and the tiered ordering $\tau$ in Figure \ref{fig.dag1} from Example \ref{example:constructTMPDAG}. Now we compare $\tau$ to the tiered ordering $\tau'$ with
\begin{align*}
    \tau'(A)&=\tau'(B)=1\\
    \tau'(C)&=\tau'(D)=\tau'(E)=\tau'(F)=\tau'(G)=2
\end{align*}

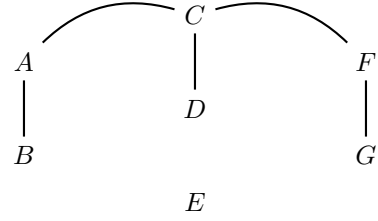
\begin{figure}[!htbp]
\centering
\begin{tikzpicture}[state/.style={thick}]

\node[state] (x1) at (0,1.25) {$A$};
\node[state] (x2) at (0,0) {$B$};
\node[state] (x3) at (2.25,1.875) {$C$};
\node[state] (x4) at (2.25,0.625) {$D$};
\node[state] (x5) at (2.25,-0.625) {$E$};
\node[state] (x6) at (4.5,1.25) {$F$};
\node[state] (x7) at (4.5,0) {$G$};

\tikzset{undir/.style = {-, thick}}
\draw[undir]
(x2) edge (x1)
(x1) edge [bend left] (x3)
(x3) edge (x4)
(x3) edge [bend left] (x6)
(x6) edge (x7)
;

\end{tikzpicture}
\caption{The undirected subgraph $\C_u$ of the CPDAG $\C$ from Figure \ref{fig.est1}.}
\label{fig:cu}
\end{figure}

In Figure \ref{fig:cu} we have the undirected subgraph $\C_u$ of the CPDAG $\C$ of $\D$.   There are two earliest unshielded paths in $\C_u$: $B-A-C-F-G$ and $B-A-C-D$. Clearly, $\C_u^{\tau}$ and $\C_u^{\tau'}$ agree on the first cross-tier edges on these paths: On $\langle B, A, C, F, G\rangle $ the first cross-tier edge is $A\rightarrow C$, and on $\langle B, A, C, D\rangle$ the first cross-tier edge is $A\rightarrow C$. Note that in these graphs, no edge is fully shielded. Orienting edges in $\C_u^{\tau}$ and $\C_u^{\tau'}$ repeatedly according to Meek's 1st rule results in the same MPDAG. Since $\tau$ is finer than $\tau'$, it follows that $\tau$ is redundant relative to $\tau'$ given $\C$. 
\end{example}

\subsection{Comparing tiered orderings}\label{sec:compareorderings}

We have now characterised when two tiered orderings produce the same MPDAG; as a consequence of this, Theorem \ref{mainthm} also gives rise to useful insights concerning when tiered knowledge is more informative. 

\begin{corollary}\label{corollary:informative}
Let $\C=(\V, \E)$ be a CPDAG  and let $\tau_1$ and $\tau_2$ be two distinct tiered orderings of $\V$. Assume that 
\begin{itemize}
    \item[(i)] every first cross-tier edge on an earliest unshielded path in $\C^{\tau_2}_u$ is also a cross-tier edge in $\C^{\tau_1}_u$, and 
    \item[(ii)] every fully shielded cross-tier edge in $\C^{\tau_2}_u$ is also a cross-tier edge in $\C^{\tau_1}_u$,
\end{itemize}
then $\tau_1$ is more informative than $\tau_2$ given $\C$ if 
\begin{itemize}
    \item[(iii)] $\C^{\tau_1}_u$ has a first cross-tier edge on an earliest unshielded path, that is not a cross-tier edge in $\C^{\tau_2}_u$, or
    \item[(iv)] $\C^{\tau_1}_u$ has more fully shielded cross-tier edges than $\C^{\tau_2}_u$.
\end{itemize}
\end{corollary}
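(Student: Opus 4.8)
The plan is to combine Theorem \ref{mainthm} with the monotonicity of informativeness under refinement of tiered orderings. Recall that ``$\tau_1$ is more informative than $\tau_2$'' means that $\G_1$ is contained in $\G_2$; since $\G_1$ and $\G_2$ both carry the skeleton of $\C$, this amounts to showing that every directed edge of $\G_2$ is also a directed edge of $\G_1$. Conditions (i)--(ii) will be used to obtain this containment, and conditions (iii) or (iv) will additionally force $\G_1\neq\G_2$, so that $\tau_1$ is strictly more informative than (rather than merely equivalent to) $\tau_2$.

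For the containment I would pass through the coarsest common refinement $\tau^*$ of $\tau_1$ and $\tau_2$: set $A\equiv B$ when $\tau_1(A)=\tau_1(B)$ and $\tau_2(A)=\tau_2(B)$, take the $\equiv$-classes as the tiers of $\tau^*$, and order these classes by a linearisation of the preorder $[A]\preceq[B]:\Leftrightarrow\tau_1(A)\le\tau_1(B)\text{ and }\tau_2(A)\le\tau_2(B)$, which is total precisely because $\tau_1$ and $\tau_2$ are compatible. One checks that $\tau^*$ is a tiered ordering, finer than both $\tau_1$ and $\tau_2$, consistent with $\C$ (no directed edge of $\C$ runs backwards in $\tau_1$ or $\tau_2$, hence none does in $\tau^*$), and -- this is where compatibility is essential -- that an edge of $\C$ is a cross-tier edge relative to $\tau^*$ if and only if it is a cross-tier edge relative to $\tau_1$ or to $\tau_2$. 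Since $\tau^*$ is finer than $\tau_2$, the observation that a finer ordering is either redundant or strictly more informative shows that $\G^*$, the tiered MPDAG relative to $\tau^*$, is contained in $\G_2$. It then remains to prove $\G^*=\G_1$, which I would do by verifying the two criteria of Theorem \ref{mainthm} for the pair $(\tau_1,\tau^*)$: for a fully shielded edge, a $\tau_1$-cross-tier edge is automatically $\tau^*$-cross-tier, while a fully shielded $\tau^*$-cross-tier edge is $\tau_1$- or $\tau_2$-cross-tier by the union property, and in the second case condition (ii) promotes it to a $\tau_1$-cross-tier edge; for an earliest unshielded path, the first $\tau^*$-cross-tier edge cannot lie strictly before the first $\tau_1$-cross-tier edge, because such an edge would be $\tau_1$-within-tier, hence $\tau_2$-cross-tier by the union property, and then the first $\tau_2$-cross-tier edge on that path -- which condition (i) forces to be $\tau_1$-cross-tier -- would precede the first $\tau_1$-cross-tier edge, a contradiction. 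Hence $\G_1=\G^*$ is contained in $\G_2$.

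For the strictness I would contrapose Theorem \ref{mainthm} applied directly to $(\tau_1,\tau_2)$: if $\G_1=\G_2$ then $\C^{\tau_1}_u$ and $\C^{\tau_2}_u$ agree on the first cross-tier edge of every earliest unshielded path and on every fully shielded cross-tier edge. Condition (iii) produces a first cross-tier edge of an earliest unshielded path in $\C^{\tau_1}_u$ that is not a cross-tier edge in $\C^{\tau_2}_u$, contradicting the first agreement; condition (iv) forces the two collections of fully shielded cross-tier edges to differ, contradicting the second. In either case $\G_1\neq\G_2$, which together with the containment gives that $\tau_1$ is strictly more informative than $\tau_2$.

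I expect the main obstacle to be the bookkeeping around ``earliest unshielded path'' in the step $\G^*=\G_1$: this notion is relative to a tiered ordering, so the earliest paths with respect to $\tau_1$, to $\tau_2$ and to $\tau^*$ need not coincide, and one has to check that the configuration used above -- a $\tau_2$-cross-tier edge occurring ``too early'' on a path that is earliest with respect to $\tau_1$ or $\tau^*$ -- is indeed controlled by condition (i), which is phrased in terms of paths earliest with respect to $\tau_2$. Handling this cleanly will likely require an auxiliary lemma comparing earliest paths across compatible refinements; alternatively, one can bypass $\tau^*$ and instead re-run only the inclusion ``every directed edge of $\G_2$ is a directed edge of $\G_1$'' from the proof of Theorem \ref{mainthm}, which is exactly the half of that argument that uses the ``$\tau_2$-side $\subseteq$ $\tau_1$-side'' content of conditions (i) and (ii).
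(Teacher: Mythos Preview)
Your primary route through the coarsest common refinement $\tau^*$ is a genuinely different decomposition from the paper's. The paper never constructs $\tau^*$; it argues directly, path by path. Under (i) and (ii) it first notes (via Theorem~\ref{mainthm}) that $\G_1=\G_2$ whenever $\tau_1$ contributes no additional orientations; then, assuming (iii), it fixes a $\tau_1$-earliest unshielded path, compares the positions $i$ and $j$ of the first $\tau_1$- and $\tau_2$-cross-tier edges on it, uses (i) to force $i<j$, and reads off from the internals of Theorem~\ref{mainthm}'s proof that the segment between $V_i$ and $V_j$ is directed in $\G_1$ but undirected in $\G_2$; case (iv) is dispatched by the observation that fully shielded edges can only be oriented by the ordering itself. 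In other words, the paper's argument is exactly your suggested fallback: re-running one half of Theorem~\ref{mainthm}'s proof rather than black-boxing the theorem. Your strictness step---contraposing Theorem~\ref{mainthm} to obtain $\G_1\neq\G_2$ from (iii) or (iv)---is correct and arguably tidier than the paper's explicit edge-by-edge exhibition.

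The obstacle you flag is real and is the price of the $\tau^*$ route: applying Theorem~\ref{mainthm} to $(\tau_1,\tau^*)$ requires matching first cross-tier edges on paths that are earliest with respect to $\tau_1$ and $\tau^*$, whereas condition (i) is stated for $\tau_2$-earliest paths, and these need not coincide. (The paper's direct argument grazes the same subtlety when it invokes (i) on the $\tau_2$-copy of a $\tau_1$-earliest path without verifying that this copy is itself $\tau_2$-earliest, so the bookkeeping is delicate either way.) What each route buys: your $\tau^*$ construction cleanly separates containment $\G_1\subseteq\G_2$ from strictness $\G_1\neq\G_2$ and makes the former conceptual (finer orderings are monotone in informativeness), at the cost of an auxiliary lemma relating ``earliest'' across compatible orderings; the paper's direct route avoids that lemma but intertwines containment and strictness and has to re-open the proof of Theorem~\ref{mainthm}.
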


In Corollary \ref{corollary:informative}, (i) and (ii) ensure that the MPDAG $\G_1$ relative to $\tau_1$ 
is contained in the MPDAG $\G_2$ relative to $\tau_2$. Additional information can be obtained in two ways: Either directly from the tiered ordering, reflected in condition (iv), or indirectly through Meek's 1st rule, reflected in condition (iii).

Consider first condition (iii). This condition does not mention the number of cross-tier edges, it is only important to know the earliest.

\begin{example}\label{example:unshielded}
Consider again the DAG $\D=(\V,\E)$ from Example \ref{example:constructTMPDAG} and Example \ref{example:mainthm}. Consider now the tiered ordering $\tau_1$ with 
\begin{align*}
\tau_1(A)&=\tau_1(B)=1\qquad\qquad\qquad\qquad\qquad \\ 
\tau_1(C)&=2\\
\tau_1(D)&=\tau_1(E)=3\\
\tau_1(F)&=4 \textit{, and}\\ 
\tau_1(G)&=5
\end{align*}

Clearly, $\tau_1$ results in the same MPDAG as $\tau$ and $\tau'$; since $\tau_1$ is finer than both $\tau$ and $\tau'$ it is redundant given $\C$. Consider now the tiered ordering $\tau_2$ of $\V$ with 
\begin{align*}
\tau_2(A)&=\tau_2(B)=\tau_2(C)=1\qquad\qquad\qquad \\ 
\tau_2(D)&=\tau_2(E)=2\\
\tau_2(F)&=3 \textit{, and}\\ 
\tau_2(G)&=4
\end{align*}

Let $\G'$ be the MPDAG relative to $\tau_2$ given $\C$. Then $\G'$ has the same edge orientations as the MPDAG relative to $\tau$, $\tau'$ and $\tau_1$, except for $\{A\rightarrow C\}$ which remains undirected in $\G'$. Here, $\tau_1$ is finer than $\tau_2$, but $\tau_1$ is not redundant. In fact, $\tau_1$ is more informative than $\tau_2$ due to condition (iii) of Corollary \ref{corollary:informative}. In addition, $\tau$ and $\tau'$ are both more informative than $\tau_2$, even though $\tau_2$ assigns the nodes to more tiers. 
\end{example}

Consider condition (iv) in Corollary \ref{corollary:informative}. This suggests that every fully shielded cross-tier edge provides unique information. Hence, there is an immediate gain in information from each additional fully shielded cross-tier edge in $\C_u$, since edges of this type can not be oriented by Meek's 1st rule. Hence, in the complete subgraphs of $\C_u$, no tiered background knowledge is redundant. 

\begin{example}[Fully shielded edges]
Consider the simple case of a CPDAG $\C=(\V,\E)$ with three nodes $\V=\{ A, B, C\}$, where $\C$ is complete: $\E=\{\{A-B\},\{B-C\},\{A-C\}\}$. Assume that the true ordering $\tau_\alpha$ assigns the nodes to individual tiers with $\tau_\alpha(A)<\tau_\alpha(B)<\tau_\alpha(C)$. There are then three types of partial orderings that are compatible with $\tau_\alpha$: Orderings that assigns the nodes to the same tier, e.g. an ordering $\tau_\beta$ with  $\tau_\beta(A)=\tau_\beta(B)=\tau_\beta(C)$, or orderings that assigns two nodes to the same tier and the third to an individual tier, e.g. $\tau_\gamma$ and $\tau_\delta$ with $\tau_\gamma(A)<\tau_\gamma(B)=\tau_\gamma(C)$ and $\tau_\delta(A)=\tau_\delta(B)<\tau_\delta(C)$.

\begin{figure}[!htbp]
\centering
\begin{tikzpicture}[state/.style={thick}]

\node[state] (A1) at (0,0) {$A$};
\node[state] (B1) at (1.5,0) {$B$};
\node[state] (C1) at (3,0) {$C$};

\node[state] (A2) at (4,0) {$A$};
\node[state] (B2) at (5.5,0) {$B$};
\node[state] (C2) at (7,0) {$C$};

\node[state] (A3) at (0,-2.75) {$A$};
\node[state] (B3) at (1.5,-2.75) {$B$};
\node[state] (C3) at (3,-2.75) {$C$};

\node[state] (A4) at (4,-2.75) {$A$};
\node[state] (B4) at (5.5,-2.75) {$B$};
\node[state] (C4) at (7,-2.75) {$C$};

\node (t1) at (1.5, -0.75) {$\G_\alpha$};
\node (t2) at (5.5, -0.75) {$\G_\beta$};
\node (t3) at (1.5, -3.5) {$\G_\gamma$};
\node (t4) at (5.5, -3.5) {$\G_\delta$};

\tikzset{undir/.style = {-, thick}}
\tikzset{dir/.style = {->, -{To[length=6, width=7]}, thick}}
\draw[dir]
(A1) edge (B1)
(A1) edge [bend left=60] (C1)
(B1) edge (C1)

(A3) edge (B3)
(A3) edge [bend left=60] (C3)

(A4) edge [bend left=60] (C4)
(B4) edge (C4)
;
\draw[undir]
(A2) edge (B2)
(A2) edge [bend left=60] (C2)
(B2) edge (C2)

(B3) edge (C3)

(A4) edge (B4)
;

\end{tikzpicture}
\caption{Top left: MPDAG $\G_\alpha$ relative to $\tau_\alpha$; note that this is equal to the underlying DAG. Top right: MPDAG $\G_\beta$ relative to $\tau_\beta$; note that this is equal to the CPDAG $\C$. Bottom left: MPDAG $\G_\gamma$ relative to $\tau_\gamma$. Bottom right: MPDAG $\G_\delta$ relative to $\tau_\delta$.}
\label{fig:ex3}
\end{figure}
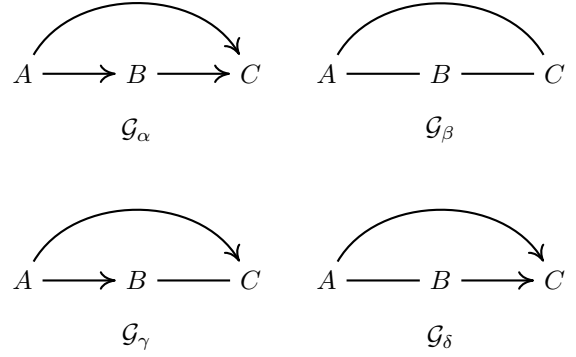

Figure \ref{fig:ex3} shows that the MPDAGs $\G_\alpha$ (relative to $\tau_\alpha$), $\G_\beta$ (relative to $\tau_\beta$), $\G_\gamma$ (relative to $\tau_\gamma$), and $\G_\delta$ (relative to $\tau_\delta$) are distinct. All oriented edges are implied by the tiered background knowledge and no edge has been oriented as a consequence of Meek's 1st rule. Here, $\tau_\delta$ and $\tau_\gamma$ are incomparable, and they are both more informative than $\tau_\beta$. Moreover, $\tau_\alpha$ is more informative than the other orderings.

\end{example}

\subsection{Simulation study}\label{sec:sim}

Corollary \ref{corollary:informative} shows that in graphs with many unshielded paths we can potentially obtain a large amount of additional information, and the  earlier we are able to identify the direction of a causal path, the more information we can gain. In summary: (1) early knowledge is in general more beneficial than late knowledge, even if the late knowledge is more detailed (c.f. Example \ref{example:unshielded}), and (2) since we expect unshielded paths to occur more frequently in sparse graphs, the effect of Meek's 1st rule is expected to be more pronounced in sparse than in dense graphs. In order to investigate to which degree these two features occur in practice, we conducted a simulation study. 

\begin{figure}[!htbp]
    \centering
    \includegraphics{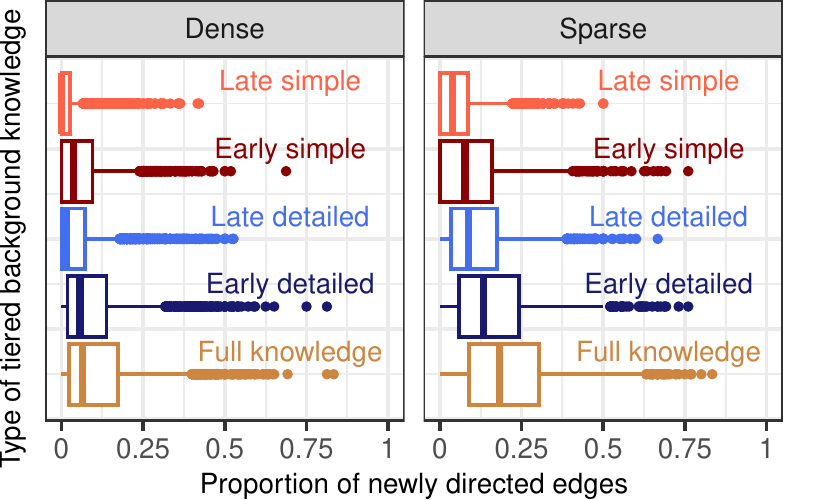}
    \caption{Results of one setting of the simulation. 6000 random DAGs with 25 nodes were generated; half of them sparse, the other half dense. For each DAG and tiered ordering, the tiered MPDAG was constructed and the difference in number of directed edges to its corresponding CPDAG was computed, divided by the total number of edges.}
    \label{fig:sim_main}
\end{figure}

To adhere to Assumption \ref{assumptionknowledge}, we generated random DAGs and for each random DAG, we considered five different, consistent tiered orderings: full knowledge, early detailed knowledge, late detailed knowledge, early simple knowledge and late simple knowledge. For each DAG, we constructed its CPDAG, and for each combination of DAG and tiered ordering, we constructed the tiered MPDAG. To adhere to Assumption \ref{assumptionCPDAG}, the CPDAGs and MPDAGs were constructed based on the independence models encoded by the DAGs; hence, finite sample issues did not occur. For each MPDAG, we counted the difference in number of directed edges between the MPDAG and its corresponding CPDAG, and divided this by the total number of edges; this  measures the fraction of edges that cannot be oriented in the CPDAG, but can be oriented in the tiered MPDAG. Since we compare oracle CPDAGs to oracle MPDAGs, this  measures exactly the (relative) gain in informativeness. A detailed description of the study can be found in Section C in the Supplement.

 Figure \ref{fig:sim_main} shows that using the full knowledge of the orderings unsurprisingly provides most new orientations. Moreover, we see that early knowledge is more beneficial than late, even though they are equally detailed, which is in line with (1) above. Interestingly, in dense graphs early knowledge can also induce more oriented edges than late knowledge, even if the later knowledge is more detailed, which is also in line with (1). Additionally, the advantage of adding tiered background knowledge is relatively larger in sparse graphs than dense graphs, which is in line with (2). 
 
 We performed the same procedure for random DAGs with node sets of size 10, 50 and 100, for which we found analogous results. These results are  depicted in Figure C.2 in the Supplement.

 \subsection{Practical example}\label{sec:example}

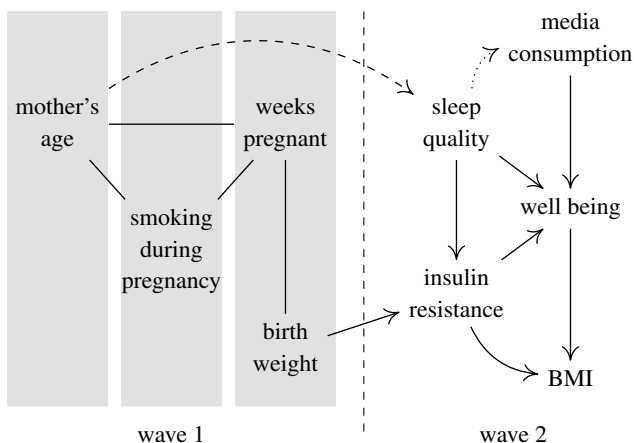
\begin{figure}[!htbp]
\centering
\begin{tikzpicture}[state/.style={thick}, scale=0.75]

\filldraw[silver] (-.875,4.5) rectangle +(1.75,-7) ;

\filldraw[silver] (1.125,4.5) rectangle +(1.75,-7) ;

\filldraw[silver] (3.125,4.5) rectangle +(1.75,-7) ;

\node[align=center] (A) at (0,2.5) {\footnotesize mother's\\ \footnotesize age};
\node[align=center] (B) at (2,0.25) {\footnotesize smoking\\ \footnotesize during\\ \footnotesize pregnancy};
\node[align=center] (C) at (4,2.5) {\footnotesize weeks\\ \footnotesize pregnant};
\node[align=center] (D) at (4,-1.5) {\footnotesize birth\\ \footnotesize weight};

\node[align=center] (E) at (7,2.5) {\footnotesize sleep\\ \footnotesize quality};
\node[align=center] (F) at (7,-.5) {\footnotesize insulin\\ \footnotesize resistance};

\node[align=center] (G) at (9,4) {\footnotesize media\\ \footnotesize consumption};
\node (H) at (9,1) {\footnotesize well being};
\node (I) at (9,-2) {\footnotesize BMI};

\tikzset{dir/.style = {->, -{To[length=5.5, width=6.5]}, line width = 0.5pt}}
\draw[dir]
(A) edge [bend left=35] [dashed] (E)

(D) edge (F)

(E) edge (F)
(E) edge [bend left] [dotted] (G)
(E) edge (H)

(F) edge  (H)
(F) edge [bend right] (I)

(G) edge (H)

(H) edge (I)

;

\draw[dashed] 
(5.375,4.5) -- (5.375,-3)
;

\tikzset{undir/.style = {-, line width = 0.5pt}}
\draw[undir]
(A) edge (B)
(A) edge (C)

(B) edge (C)

(C) edge (D)
;

\node (t1) at (2,-3) {\footnotesize wave 1};
\node (t2) at (8,-3) {\footnotesize wave 2};

\end{tikzpicture}
\caption{Simplified example of a cohort study. Early life factors are measured at wave 1, childhood health factors  at wave 2.  Edges are oriented by the v-structures, time-ordering and Meek's rules. Expert knowledge  allows the first tier to be subdivided into three new tiers.}
\label{fig:practicalexample}
\end{figure}

Figure \ref{fig:practicalexample} is a simplified example based on cohort data analysed in \citet{foraita2022}, it shows the MPDAG obtained from the time-ordering of wave 1 and 2. The corresponding CPDAG only has two fewer directed edges (the dashed and dotted), so the time-ordering does not provide much new information. The induced subgraph over the first wave remains undirected. In order to obtain a more informative graph, we should consult early life experts rather than children's life style experts; alternatively, the cohort study should have been designed such that early life factors were measured at different time points. While mother's age naturally is determined before smoking during pregnancy, which again occurs before birth, experts could disagree on the causal order of pregnancy duration and birth weight, which are defined at the exact same time. However, it is not necessary to order these particular two nodes, here: Any ordering $\tau$ with $\tau$(mothers age)$<\tau$(smoking during pregnancy)$<\tau$(remaining nodes) allows for the {\em entire} graph to be oriented in this case.

\section{Relation to other work} \label{sec:comparisons}

Subject matter background knowledge can come from different sources and take different forms, and previous work provides results for other knowledge than tiered one. A distinct type of causal background knowledge is, for instance, obtainable when experimentation is possible, see \cite{hauser2012characterization} for a characterisation of interventional equivalence classes of DAGs. While a tiered ordering is given before learning a graph, experiments can be performed iteratively, and the choice of most informative interventions might depend on the given intermediate graphical structure. It has been shown in \cite{eberhardt2008almost} and \cite{hauser2014two} that the most informative strategy is to intervene on nodes in the largest undirected complete subgraphs: this yields most new edge orientations, including those following from  Meek's rules. Since the knowledge obtained from an intervention is local, this type of background knowledge lacks the completeness of tiered knowledge. This means that all four of Meek's rules might apply after orienting edges according to interventional knowledge, resulting in additional orientations within a complete subgraphs, in contrast to tiered background knowledge. 

\cite{mooij2020joint} considered context variables. These can be seen as a special case of tiered knowledge: Some of the variables, the context variables, form an earlier tier, and others, the system variables, form later tiers. However, there can be additional knowledge about presence/absence of relations between context variables, or their causal relations may not be of interest: In these cases we are no longer in the tiered framework. 

Background knowledge about non-ancestral (pairwise) relations,  considered by \cite{fang2020ida},  can be seen as a non-complete version of tiered knowledge. They show that knowledge of non-ancestral relations can be translated to a set of direct causal relations, i.e. directed edges. In contrast, the completeness and transitivity of tiered knowledge subsumes such relations through the orientations of cross-tier edges. A more general representation of background knowledge is provided in \cite{fang2022representation}, where   ancestral background knowledge on node pairs is considered; this is surprisingly different from tiers and cannot necessarily be encoded  graphically. The authors provide a criterion for checking equivalence of background knowledge, which is more general than the one provided here since tiered background knowledge can be considered a complete version of pairwise causal constraints. However, unlike \citet {fang2022representation}, our criterion can be checked on the graph, and due to the properties of tiered orderings, it is rather simple. 

Multivariate time series, as repeated measurements of the same variables over time \citep{malinsky2018causal, runge2019inferring}, have a very obvious and unambiguous tiered ordering, and in this sense our results extend to time series. But because time series are observations on a single unit over a long time instead of multiple i.i.d. observations, the models typically impose additional  structure. For instance, that each variable depends on its own past, thus forcing edges, which is a restriction that we have not considered; and that there is a limited memory (e.g. $k$-th order), thus disallowing edges, which we have also not considered; and importantly, for time series a stationarity or slow/smooth change of the structure is enforced which is also not covered by tiered background knowledge. 
Our results on the ensuing equivalence classes, such as absence of partially directed cycles, still apply under these additional structural assumptions as they restrict the  skeleton  of the true CPDAG, and the tiered (in this case temporal) background knowledge complements them.  
Considering informativeness, sometimes it may be possible to impose additional tiers within time-slices if there is a known order for the contemporaneous variables, e.g. due to biological processes underlying medical time series; this could be useful in obtaining more edge orientations. 

A different line of work on using background knowledge relaxes the assumption of causal sufficiency. Latent variables can be accommodated in maximal ancestral graphs (MAGs) (\cite{richardson2002ancestral}), and the corresponding equivalence classes are represented by partial ancestral graphs (PAGs), see characterization by \cite{ali2009markov}. Different orientation rules are needed, and a set of ten rules were introduced by \cite{zhang2008} ensuring the maximally informative PAG. For added background knowledge it has not yet been shown, in general, that these ten rules yield a maximally informative graph. However, this has been shown for tiered background knowledge under the extra assumptions of no cross-tier confounding and no selection bias \citep{andrews2020completeness}. Moreover, in this case it turns out that not all ten orientation rules are needed,  similarly to our Lemma \ref{mainlemma}. We therefore conjecture that analogous results to those of section \ref{sec:properties} extend to PAGs with tiered background knowledge under the assumptions of no selection bias and no cross-tier confounding. Further work is still needed to relax the often implausible assumption of no cross-tier confounding.

\section{Discussion}

By formalising equivalence classes restricted by tiered orderings, we provided some new insights: Tiered MPDAGs do not have partially directed cycles and are chain graphs with chordal chain components; this makes them easier to handle and interpret, e.g. for causal effect estimation. We have given a characterisation of tiered MPDAGs which clarified what can be gained by adding tiered knowledge and what will still remain unknown. Sparse graphs, in particular, will benefit much from edge orientations implied by the tiered ordering; further, eliciting background knowledge to separate out early tiers is especially informative. Hence, this is when we do become `wiser with time'.

In summary, we believe that oftentimes background knowledge comes in the form of a tiered ordering. Moreover, tiered knowledge can be expected to  be reliable especially when based on temporal information. A benefit of the tiered orderings is that in case of doubts or disagreements about  the ordering, these may be resolved by  coarsening the tiers and thus arrive at, say, a consensus among differing expert opinions. Tiered MPDAGs are, therefore, at least as plausible as their corresponding CPDAGs without background knowledge. In addition, tiered MPDAGs will also be at least as informative as their corresponding CPDAGs -- in practice they will often be much more informative. While it is self-evident that any background knowledge should be exploited for causal structure learning, we have illustrated how specific aspects and how much the background knowledge results in an information gain that goes well beyond the orientation of cross-tier edges. 

\begin{acknowledgements} 
This project was funded by the Deutsche Forschungsgemeinschaft (DFG, German Research Foundation) – Project 281474342/GRK2224/2. Support from DFG (SFB 1320 EASE) is also acknowledged. We would like to thank the reviewers for their helpful and constructive comments. 
\end{acknowledgements}

\bibliography{references}

\clearpage

\appendix

\section*{Supplement}

\counterwithin{definition}{section}

\renewcommand{\thedefinition}{\Alph{section}.\arabic{definition}}

\counterwithin{figure}{section}

\renewcommand{\thefigure}{\Alph{section}.\arabic{figure}}

\counterwithin{corollary}{section}

\renewcommand{\thecorollary}{\Alph{section}.\arabic{corollary}}

\counterwithin{lemma}{section}

\renewcommand{\thelemma}{\Alph{section}.\arabic{lemma}}

\counterwithin{algocf}{section}

\renewcommand{\thealgocf}{\Alph{section}.\arabic{algocf}}

\section{Terminology}

\paragraph{Nodes and edges.} We define a \emph{graph} $\G = (\V, \E)$ as a collection of \emph{nodes} (or \emph{vertices}) $\V$ and \emph{edges} $\E$.  Edges can be either \emph{undirected} $A-B$ or \emph{directed} $A \rightarrow B$. By $A\any B$ we denote an arbitrary edge, i.e. this serves as a placeholder for either a directed or undirected edge. Two nodes $A,B\in\V$ are \emph{adjacent} in $\G$ if $\{ A\any B\}\in\E$. No node can be adjacent to itself, and there can be at most one edge between any pair of nodes. We say that an edge of the form $A\rightarrow B$ is directed out of $A$ (into $B$), and we then say that $A$ is a \emph{parent} of $B$. If there is an undirected edge between two nodes $A-B$, we say that $A$ and $B$ are \emph{neighbours}. Let $A\in\V$ be a node in a graph $\G=(\V,\E)$, then $\nb{\G}{A}$/$\adj{\G}{A}$/$\pa{\G}{A}$ is the set of neighbours/adjacent nodes/parents of $A$ in $\G$. A graph is \emph{complete} if all its nodes are adjacent. The \emph{skeleton} of a graph is the undirected graph obtained by replacing its directed edges with undirected edges. 

\paragraph{Subgraphs.} We call $\G'=(\V',\E')$ a \emph{subgraph} of $\G=(\V,\E)$ if $\V'\subseteq\V$ and $\E'\subseteq\E$. By $\G_u=(\V, \E_u)$ we denote the \emph{undirected subgraph} of $\G$, where $\E_u$ is obtained from $\E$ by removing all directed edges. Correspondingly, $\G_d=(\V, \E_d)$ is the \emph{directed subgraph} of $\G$, where $\E_d$ is obtained from $\E$ by removing all undirected edges. Let $\mathbf{A}\subseteq\V$, then the \emph{induced subgraph} of $\G$ over $\mathbf{A}$ is $\G_\mathbf{A}=(\mathbf{A},\E_\mathbf{A})$ where $\E_\mathbf{A}\subseteq\E$ contains all the edges between the nodes in $\mathbf{A}$. 

\paragraph{Paths and cycles.} A \emph{path} $\pi=\langle V_1,V_2,...,V_{K-1}, V_K \rangle$ from $V_1\in\V$ to $V_K\in\V$ of length $K$ consists of a sequence of distinct nodes where $V_i\in\adj{}{V_{i+1}}$ for $1\leq i<K$. A path from a set $\mathbf{A}\subseteq\V$ to another set $\mathbf{B}\subseteq\V$ is a path from some $A\in\mathbf{A}$ to some $B\in\mathbf{B}$. The \emph{subpath} of $\pi$ from $V_i$ to $V_j$ for $1\leq i\leq j\leq K$ is  $\pi(V_i,V_j)=\langle V_i, V_{i+1},\ldots ,V_{j-1},V_j\rangle$. Let $\G'=(\V,\E')$ be a graph with same skeleton as $\G=(\V,\E)$ but possibly $\E'\neq\E$, then for a path $\pi$ in $\G$ its \emph{corresponding path} in $\G'$ is the path $\pi'$ in $\G'$ consisting of the same nodes as $\pi$. An \emph{undirected path} consists only of undirected edges. A \emph{directed path} from $V_1$ to $V_K$ has all edges oriented towards $V_K$, i.e. $V_{j}\rightarrow V_{j+1}$ for all $1\leq j<K$; then $V_1$ is an \emph{ancestor} of $V_K$ ($V_K$ is a \emph{descendant} of $V_1$). A path from $V_1$ to $V_K$ that contains both directed and undirected edges with at least one edge $V_{j}\rightarrow V_{j+1}$ for some $1\leq j<K$ directed towards $B$ and no edge $V_{j}\leftarrow V_{j+1}$ for any $1\leq j<K$ is a \emph{partially directed path} from $V_1$ to $V_K$. An undirected (directed) path from $V_1$ to $V_K$ combined with an undirected (directed) path from $V_K$ to $V_1$ we call an \emph{undirected (directed) cycle}. An undirected or partially directed path from $V_1$ to $V_K$ combined with a directed or partially directed path from $V_K$ to $V_1$ we call a \emph{partially directed cycle}. 

\paragraph{(Partially) directed acyclic graphs.}  A graph consisting of only undirected edges is an \emph{undirected graph}. An undirected graph is \emph{chordal} if every cycle of length $\geq 4$ has an adjacent pair of non-consecutive nodes. A \emph{directed acyclic graph} (DAG) is  a graph containing only directed edges and no directed cycles. A partially directed acyclic graph (PDAG) is a graph containing both directed and undirected edges and no directed cycles; DAGs and undirected graphs are special cases of PDAGs. A \emph{chain graph} is a PDAG that does not have any partially directed cycles. The \emph{chain components} of a chain graph are the undirected subgraphs.

\paragraph{Colliders, (un-) shielded and v-structures.} We call a triple $\langle A, B, C\rangle$ \emph{unshielded} if $A$ and $B$ are adjacent, $B$ and $C$ are adjacent, and $A$ and $C$ are not adjacent. We call a path unshielded if all triples on the path are unshielded. If a triple of the form $A\rightarrow B\leftarrow C$ occurs, we call $B$ a \emph{collider}, and if the triple is unshielded we call it a \emph{v-structure}.

\paragraph{d-separation}

\begin{definition}[d-connecting]
Let $\pi$ be a path in some PDAG $\G=(\V,\E)$, and let $\mathbf{C}\subset\mathbf{V}$. If (i) every collider $V$ on $\pi$, or a descendant of $V$, is in $\mathbf{C}$, and (ii) no non-collider on $\pi$ is in $\mathbf{C}$, then $\pi$  is d-connecting given $\mathbf{C}$.
\end{definition}

If there exists a path from a set of nodes $\mathbf{A}$ to another set of nodes $\mathbf{B}$, where $\mathbf{A}\cap \mathbf{B}=\emptyset$, that is d-connecting given $\mathbf{C}$, we say that $\mathbf{A}$ and $\mathbf{B}$ are \emph{d-connected} given \textbf{C}. If no such path exists, we say that $\mathbf{A}$ and $\mathbf{B}$ are \emph{d-separated} given \textbf{C}, and we denote this by
\begin{align*}
\mathbf{A}\perp_d \mathbf{B}\mid \mathbf{C}
\end{align*}
We define an \emph{independence model} $\I (\G )$ induced by a graph $\G$ as the collection of all $d$-separations in $\G$: 
\begin{align*}
(\mathbf{A}\perp_d  \mathbf{B}\mid \mathbf{C})\in\I (\G) \Leftrightarrow \text{$A$ and $B$ are d-sep. by $\mathbf{C}$ in $\G$}
\end{align*}

\paragraph{Markov equivalence and CPDAGs.} We say that two graphs $\G_1$ and $\G_2$ are \emph{Markov equivalent} if they induce the same independence model: $\I(\G_1)=\I(\G_2)$; an \emph{equivalence class} is  a class of Markov equivalent graphs. A \emph{completed partially directed acyclic graph} (CPDAG) represents an equivalence class of DAGs, and can consist of undirected as well as directed edges: Undirected edges represent edges for which there exists at least one DAG in the equivalence class where the edge is oriented in one direction, and at least one DAG, where it is oriented in the opposite direction. Directed edges represent edges that must be identical in every DAG contained in the equivalence class. Two DAGs belong to the same equivalence class if and only if they have the same skeleton and the same v-structures \citep{verma1990}. A graph is \emph{maximally informative} if no additional edge can be oriented without restricting the equivalence class.  A \emph{restricted equivalence class} is a class of Markov equivalent graphs, that encode some additional common information. A \emph{maximally oriented partially directed acyclic graph} (MPDAG) represents a restricted equivalence class.

\section{Previous results}

\subsection{Meek's rules}

An equivalence class of DAGs is uniquely characterised by the skeleton and v-structures \citep{verma1990}, but more directed edges might be shared among the DAGs in the class. \cite{meek1995} introduced a set of four orientation rules (Figure \ref{fig.meeksrules}), often referred to as \emph{Meek's rules}, for which the graphical output will be maximally informative. Given the correct skeleton and v-structures of some equivalence class, repeated application of rules 1-3 outputs a CPDAG. Given the correct skeleton and v-structures, and additional background knowledge, repeated application of rules 1-4 outputs an MPDAG.

\label{sec:meek}
\begin{figure}[!htbp]
\centering
\begin{tikzpicture}[state/.style={thick}]

\node (r1) at (1,-0.5) {Rule 1};
\node (a1) at (1,-1) {$\Longrightarrow$};
\node (r2) at (1,-3.5) {Rule 2};
\node (a2) at (1,-4) {$\Longrightarrow$};
\node (r3) at (1,-6.5) {Rule 3};
\node (a3) at (1,-7) {$\Longrightarrow$};
\node (r4) at (1,-9.5) {Rule 4};
\node (a4) at (1,-10) {$\Longrightarrow$};

\node (i) at (-1.125,-2.25) {(i)};
\node (i') at (2.875,-2.25) {(i')};
\node (ii) at (-1.125,-5.25) {(ii)};
\node (ii') at (2.875,-5.25) {(ii')};
\node (iii) at (-1.125,-8.25) {(iii)};
\node (iii') at (2.875,-8.25) {(iii')};
\node (iv) at (-1.125,-11.25) {(iv)};
\node (iv') at (2.875,-11.25) {(iv')};

\node[state] (A1) at (-2,0) {$A$};
\node[state] (B1) at (-0.25,0) {$B$};
\node[state] (C1) at (-0.25,-1.75) {$C$};

\node[state] (A1') at (2,0) {$A$};
\node[state] (B1') at (3.75,0) {$B$};
\node[state] (C1') at (3.75,-1.75) {$C$};

\node[state] (A2) at (-2,-3) {$A$};
\node[state] (B2) at (-0.25,-3) {$B$};
\node[state] (C2) at (-0.25,-4.75) {$C$};

\node[state] (A2') at (2,-3) {$A$};
\node[state] (B2') at (3.75,-3) {$B$};
\node[state] (C2') at (3.75,-4.75) {$C$};

\node[state] (A3) at (-2,-6) {$A$};
\node[state] (B3) at (-0.25,-6) {$B$};
\node[state] (C3) at (-2,-7.75) {$C$};
\node[state] (D3) at (-0.25,-7.75) {$D$};

\node[state] (A3') at (2,-6) {$A$};
\node[state] (B3') at (3.75,-6) {$B$};
\node[state] (C3') at (2,-7.75) {$C$};
\node[state] (D3') at (3.75,-7.75) {$D$};

\node[state] (A4) at (-2,-9) {$A$};
\node[state] (B4) at (-0.25,-9) {$B$};
\node[state] (C4) at (-2,-10.75) {$C$};
\node[state] (D4) at (-0.25,-10.75) {$D$};

\node[state] (A4') at (2,-9) {$A$};
\node[state] (B4') at (3.75,-9) {$B$};
\node[state] (C4') at (2,-10.75) {$C$};
\node[state] (D4') at (3.75,-10.75) {$D$};

\tikzset{dir/.style = {->, -{To[length=6, width=7]}, thick}}
\draw[dir]
(A1) edge (B1)
(A1') edge (B1')
(B1') edge [dashed] (C1')
(A2) edge (B2)
(B2) edge (C2)
(A2') edge (B2')
(B2') edge (C2')
(A2') edge [dashed] (C2')
(B3) edge (D3)
(C3) edge (D3)
(B3') edge (D3')
(C3') edge (D3')
(A3') edge [dashed] (D3')
(A4) edge (B4)
(B4) edge (D4)
(A4') edge (B4')
(B4') edge (D4')
(C4') edge [dashed] (D4')
; 
\tikzset{undir/.style = {-,  thick}}
\draw[undir]
(B1) edge (C1)
(A2) edge (C2)
(A3) edge (B3)
(A3) edge (C3)
(A3) edge (D3)
(A3') edge (B3')
(A3') edge (C3')
(A4) edge (C4)
(B4) edge (C4)
(C4) edge (D4)
(A4') edge (C4')
(B4') edge (C4')
;                    
        
\end{tikzpicture}

\caption{Meek's rules. If (i), (ii), (iii) or (iv) occur as an induced subgraph of some PDAG, then orient them as (i'), (ii'), (iii') or (iv'), respectively.}
\label{fig.meeksrules}
\end{figure}
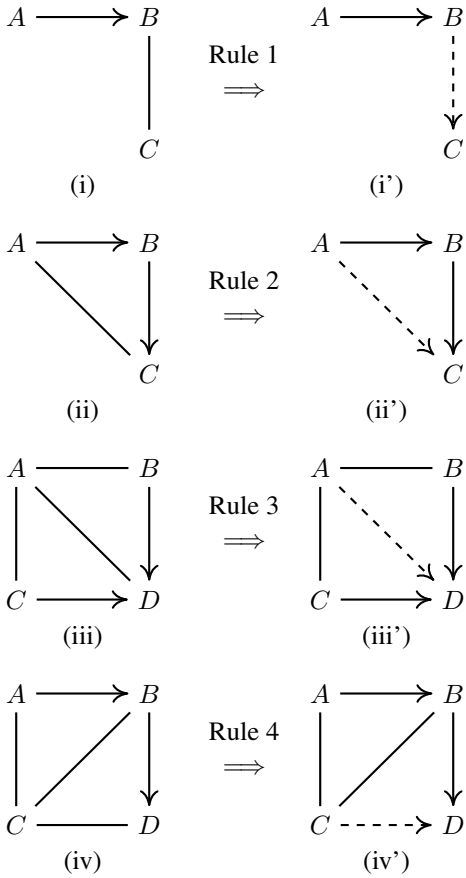

\subsection{Adjustment criterion}

In a CPDAG $\C=(\V,\E)$, a path $\pi=\langle V_1,\ldots ,V_K\rangle$ is \emph{possibly causal} from $V_1$ to $V_K$ if it does not contain an edge $V_i\leftarrow V_{i+1}$ with $1\leq i<K$. Otherwise it is \emph{non-causal} from $V_1$ to $V_K$.

\begin{definition}[b-possibly causal \citep{perkovic2017}]
Let $\G=(\V,\E)$ be an MPDAG and let $\pi=\langle V_1,\ldots ,V_K\rangle$ be a path in $\G$. Then $\pi$ is \emph{b-possibly causal} from $V_1$ to $V_K$ in $\G$ if and only if no edge $V_i \leftarrow V_j$, $1 \leq i < j \leq K$ is in $\G$. Otherwise, $\pi$ is \emph{b-non-causal path} in $\G$.
\end{definition}

\subsection{IDA-algorithm}

Let $\G=(\V,\E)$ be a graph, and let $\mathbf{X}\subseteq\V$. Then we denote the set of parents of $\mathbf{X}$ in $\G$ by $\pa{\G}{\mathbf{X}}=\underset{X\in\mathbf{X}}{\cup}\pa{\G}{X}$.

Let $\G=(\V,\E)$ be an MPDAG, let $X\in\V$ and let $\mathbf{S}\subseteq\nb{\G}{X}$. Then $\G_{\mathbf{S}\rightarrow X}$ is the PDAG obtained by orienting all undirected edges $Z-X$ to $Z\rightarrow X$ if $Z\in\mathbf{S}$ and $Z\leftarrow X$ if $Z\in\nb{\G}{X}\backslash\mathbf{S}$. A set of nodes $\mathbf{P}\subseteq\V$ is a \emph{valid} (\emph{jointly valid}) parent set of $X$ ($\mathbf{X}$) if there exists a DAG $\D$ in the class represented by $\G$ for which $\pa{\D}{X}=\mathbf{P}$ ($\pa{\D}{\mathbf{X}}=\mathbf{P}$).

\SetKwInOut{Input}{input} 
\SetKwInOut{Output}{output}

\begin{algorithm}[!htbp]

\label{alg:ida_local}

\caption{Locally obtaining valid parent sets from a tiered MPDAG using local IDA \citep{maathuis2009estimating}}

\Input{Tiered MPDAG $\G=(\V,\E)$, node $X\in\V$} 

\Output{Multiset $\mathbf{PA}_{\G}^{\mathrm{local}}(X)$}

\vspace{5pt} 

$\mathbf{PA}_{\G}^{\mathrm{local}}(X)=\emptyset$\\

\ForAll{$\mathbf{S}\subseteq\nb{\G}{X}$}{

\uIf{$\G_{\mathbf{S}\rightarrow X}$ has no new v-structure with $X$ as collider}{
add $\pa{\G}{X}\cup\mathbf{S}$ to $\mathbf{PA}_{\G}^{\mathrm{local}}(X)$

} % if

} % for

\end{algorithm}

\begin{algorithm}[!htbp]

\label{alg:ida_joint}

\caption{Semi-locally obtaining jointly valid parent sets from a tiered MPDAG using joint IDA \citep{nandy2017estimating}}

\Input{Tiered MPDAG $\G=(\V,\E)$, set of nodes $\mathbf{X}\subseteq\V$, $\mathbf{X}=\{X_1,\ldots , X_k\}$} 

\Output{Multiset $\mathbf{PA}_{\G}^{\mathrm{joint}}(\mathbf{X})$}

\vspace{5pt} 

Obtain $\G_u$ and $\G_d$ from $\G$\\

Obtain the connected components of $\G_u$ that contain at least one node of $\mathbf{X}$: $\G_{u,1},\ldots ,\G_{u,l}$ for $l\leq k$

\For{$i=1,\ldots ,l$}{

Let $\mathbf{PA}_i$ be the multiset of all jointly valid parent sets of the nodes of $\mathbf{X}$ in $\G_{u,i}$ obtained by constructing all DAGs in the (restricted) equivalence class represented by $\G_{u,i}$.

} % for

Construct $\mathbf{PA}_u$ by taking all possible combinations of $\mathbf{PA}_1,\ldots ,\mathbf{PA}_l$\\

$\mathbf{PA}_{\G}^{\mathrm{joint}}(\mathbf{X})=\{\mathbf{PA}_1'\cup\pa{G_d}{X_1},\ldots ,\mathbf{PA}_k'\cup\pa{\G_d}{X_k}\}$\\ where $(\mathbf{PA}_1',\ldots ,\mathbf{PA}_k')\in\mathbf{PA}_u$.

\end{algorithm}

\section{Simulation study}

Simulations were done in \textsf{R} version 4.2.1 using the \texttt{pcalg} package version 2.7-8, and random DAGs were simulated using the \texttt{randDAG} function. We simulated 8 different types of DAGs: The DAGs had either 10, 25, 50 or 100 nodes, and the structure was either dense or sparse. Sparse graphs had an expected number of adjacent nodes of 2, while dense graphs had an expected number of adjacent nodes of 5. Each DAG type was simulated three times, using either the Erdös-Rényi method, power-law method or geometric method.

We assumed that the full tiered ordering of the nodes assigned them to 5 tiers of equal size; hence, the tier size was either 2, 5, 10 or 20 depending on the number of nodes in the graph. We compared the full knowledge of the five tiers to four combinations of early or late, and more or less detailed knowledge. An overview of the tiered orderings can be found in Figure \ref{fig:orderings}. For each DAG, we constructed its CPDAG, and for each combination of DAG and tiered ordering $\tau_{\mathrm{full}}$ (full knowledge), $\tau_{\mathrm{early1}}$ (early simple), $\tau_{\mathrm{early2}}$ (early detailed), $\tau_{\mathrm{late1}}$ (late detailed) or $\tau_{\mathrm{late2}}$ (late detailed), we constructed the tiered MPDAG. For each MPDAG, the number of additional directed edges compared to its corresponding CPDAG was counted. The above was repeated 1000 times for each combination of DAG type and simulation method; i.e. a total of 24,000 simulations. 

The differences between the tiered MPDAGs and the corresponding CPDAGs are visualised in the boxplots in Figure \ref{fig:sim_supplement1} and in Figure \ref{fig:sim_main} in the main text. In Figure \ref{fig:sim_supplement1} and Figure \ref{fig:sim_main} we consider the number of new directed edges divided by the total number of edges in the graphs; the raw numbers are depicted in Figure \ref{fig:sim_supplement2}.

\begin{figure}[!htbp]

\centering
\resizebox{225pt}{110pt}{
\begin{tikzpicture}

\node (t1) at (0,0) {$\tau_{\mathrm{full}}=0$};
\node (t2) at (2,0) {$\tau_{\mathrm{full}}=1$};
\node (t3) at (4,0) {$\tau_{\mathrm{full}}=2$};
\node (t4) at (6,0) {$\tau_{\mathrm{full}}=3$};
\node (t5) at (8,0) {$\tau_{\mathrm{full}}=4$};

\draw[dashed] 
(1,0.75) -- (1,-0.75)
(3,0.75) -- (3,-0.75)
(5,0.75) -- (5,-0.75)
(7,0.75) -- (7,-0.75)
;

\draw[decoration={brace,mirror,raise=5pt},decorate]
(-1,-0.75) -- node[below=6pt] {$\tau_{\mathrm{early2}}=0$} (0.95,-0.75);

\draw[decoration={brace,mirror,raise=5pt},decorate]
(1.05,-0.75) -- node[below=6pt] {$\tau_{\mathrm{early2}}=1$} (2.95,-0.75);

\draw[decoration={brace,mirror,raise=5pt},decorate]
(3.05,-0.75) -- node[below=6pt] {$\tau_{\mathrm{early2}}=2$} (9,-0.75);

\draw[decoration={brace,mirror,raise=5pt},decorate]
(-1,-1.75) -- node[below=6pt] {$\tau_{\mathrm{late2}}=0$} (4.95,-1.75);

\draw[decoration={brace,mirror,raise=5pt},decorate]
(5.05,-1.75) -- node[below=6pt] {$\tau_{\mathrm{late2}}=1$} (6.95,-1.75);

\draw[decoration={brace,mirror,raise=5pt},decorate]
(7.05,-1.75) -- node[below=6pt] {$\tau_{\mathrm{late2}}=2$} (9,-1.75);

\draw[decoration={brace,mirror,raise=5pt},decorate]
(1.05,-2.75) -- node[below=6pt] {$\tau_{\mathrm{early1}}=0$} (9,-2.75);

\draw[decoration={brace,mirror,raise=5pt},decorate]
(-1,-2.75) -- node[below=6pt] {$\tau_{\mathrm{early1}}=1$} (0.95,-2.75);

\draw[decoration={brace,mirror,raise=5pt},decorate]
(-1,-3.75) -- node[below=6pt] {$\tau_{\mathrm{late1}}=0$} (6.95,-3.75);

\draw[decoration={brace,mirror,raise=5pt},decorate]
(7.05,-3.75) -- node[below=6pt] {$\tau_{\mathrm{late1}}=1$} (9,-3.75);

\end{tikzpicture}
}
\caption{Overview of the tiered orderings used for the simulation study. The tiered ordering $\tau_{\mathrm{full}}$ is the full ordering of the nodes. The orderings $\tau_{\mathrm{early1}}$ and  $\tau_{\mathrm{late1}}$ assign the nodes to two tiers: The main difference between these two is that $\tau_{\mathrm{early1}}$ is able to distinguish the earliest tier, while $\tau_{\mathrm{early2}}$ is able to distinguish the latest tier. The tiered orderings $\tau_{\mathrm{early2}}$ and  $\tau_{\mathrm{late2}}$ assign the nodes to three tiers: While $\tau_{\mathrm{early2}}$ contains knowledge of early tiers, $\tau_{\mathrm{late2}}$ contains knowledge of later tiers. }
\label{fig:orderings}
\end{figure}
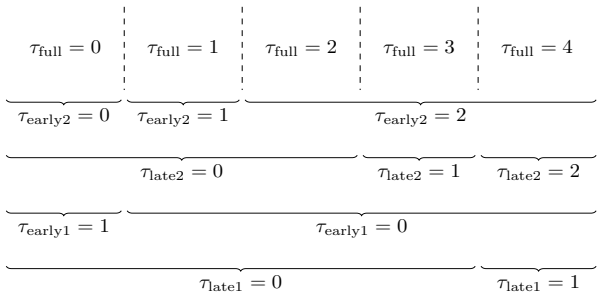

\begin{figure}[!htbp]
    \centering
    \includegraphics{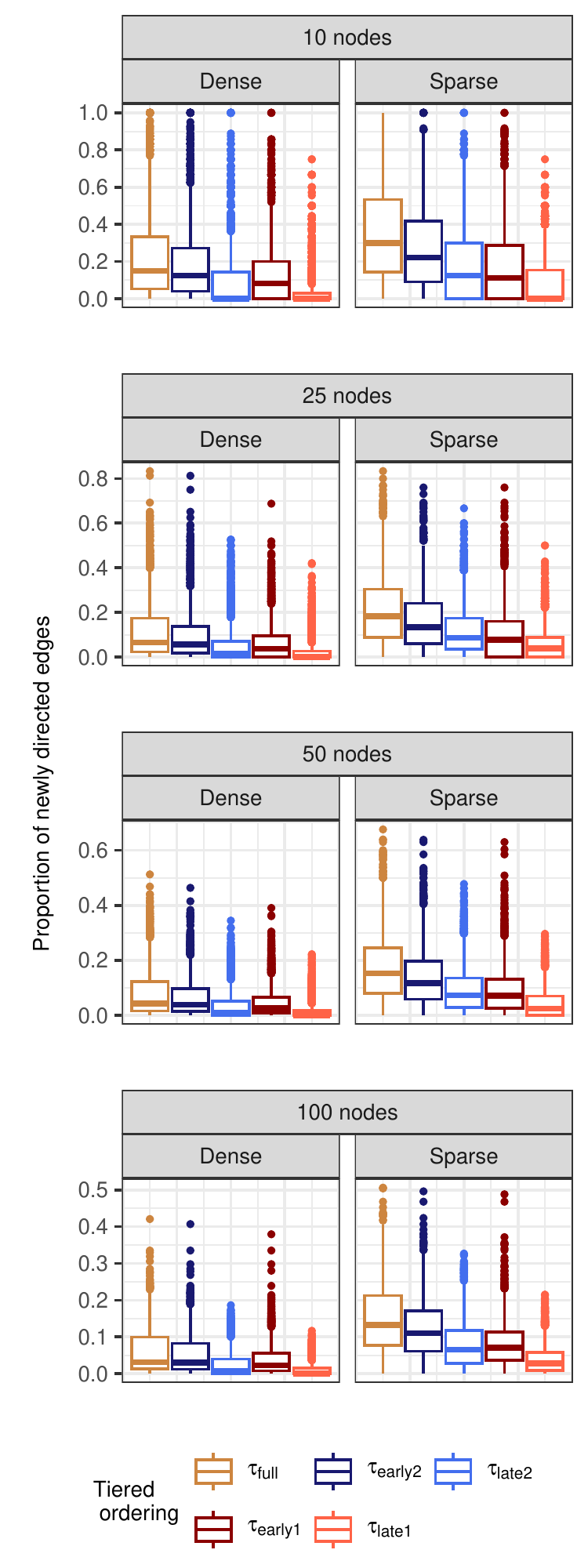}
    \caption{Results of the simulation study. 24,000 random DAGs with 10, 25, 50 or 100 nodes were generated; half of them sparse, the other half dense. For each random DAG and each tiered ordering, the tiered MPDAG was constructed and the difference in number of directed edges to its corresponding CPDAG was computed and divided by the total number of edges.}
    \label{fig:sim_supplement1}
\end{figure}

\begin{figure}[!htbp]
    \centering
    \includegraphics{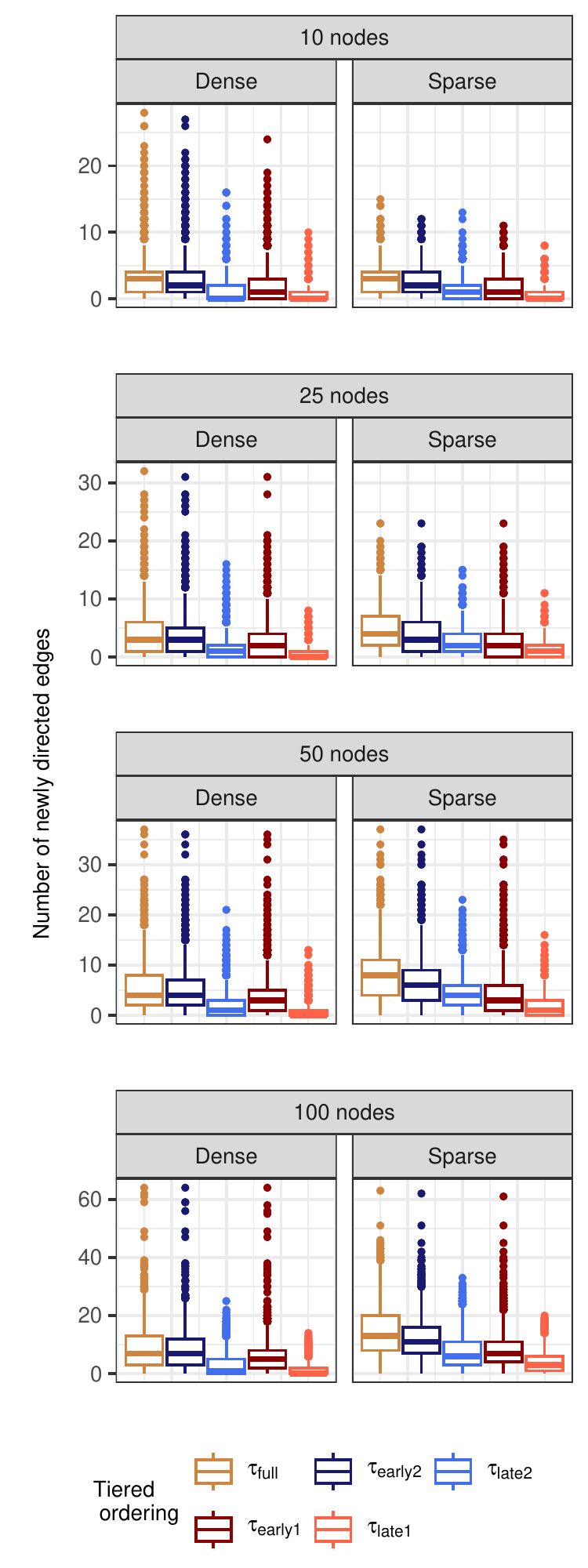}
    \caption{Results of the simulation study. 24,000 random DAGs with 10, 25, 50 or 100 nodes were generated; half of them sparse, the other half dense. For each random DAG and each tiered ordering, the tiered MPDAG was constructed and the difference in number of directed edges to its corresponding CPDAG was computed.}
    \label{fig:sim_supplement2}
\end{figure}

\section{Proofs for section \ref{sec:properties}}

\subsection{Proof of Lemma \ref{mainlemma}}

\begin{proof}
Since the MPDAG is unambiguously defined by the equivalence class and tiered ordering, if $\G$ is an MPDAG, then by construction it is the MPDAG of $\C$ relative to $\tau$. Hence, we need to show that it is in fact an MPDAG.

We proceed in two steps: (1) We show that in $\C^\tau$ an induced subgraph like in Figure \ref{fig.meeksrules} (i) can occur, while no induced subgraphs like in Figures \ref{fig.meeksrules} (ii)-(iv) can occur. (2) Let $\C^{\tau,n}$ be the graph obtained by applying Meek's 1st rule to $\C^{\tau}$ $n$ times. We will show that Figure \ref{fig.meeksrules} (i) can occur as an induced subgraph of $\C^{\tau, n}$, while Figures \ref{fig.meeksrules} (ii)-(iv) cannot occur as induced subgraphs of $\C^{\tau, n}$. This means that the resulting graph $\G$ is maximally informative, and we conclude that it is an MPDAG.

(1) \emph{Rule 1:} Assume that there is an undirected induced subgraph of $\C$ over $\{A,B,C\}\subseteq\mathbf{V}$ with adjacencies as in Figure \ref{fig.meeksrules} (i). We can obtain a triple with orientations identical to Figure \ref{fig.meeksrules} (i) in $\C^\tau$ if we have $A-B-C$ in $\C$ and $\tau (A)<\tau (B)=\tau (C)$; then $\tau$ will force $A\rightarrow B$.

\emph{Rule 2:} Assume that there is an induced subgraph of $\C$ over $\{A,B,C\}\subseteq\mathbf{V}$ with adjacencies as in Figure \ref{fig.meeksrules} (ii). Since $\C$ does not contain any partially directed cycles, this subgraph will have either 3, 2 or 0 directed edges. The case with 3 directed edges is not relevant, as well as any orientation of $\langle A, B, C\rangle$ other than $A\rightarrow B\rightarrow C$; however, the latter cannot occur in $\C$ since $\C$ is maximally informative. Hence, only an undirected subgraph in $\C$ allows for a subgraph like Figure \ref{fig.meeksrules} (ii) in $\C^\tau$. If there are edges $A\rightarrow B$ and $B\rightarrow C$ in $\C^{\tau}$ they must have been forced by $\tau$ through $\tau (A) <\tau (B)<\tau (C)$. By transitivity this implies $\tau (A) <\tau (C)$, and $A\rightarrow C$ will be oriented by $\tau$ as well.

\emph{Rule 3:} If \ref{fig.meeksrules} (iii) is an induced subgraph of $\C^{\tau}$, then it is also an induced subgraph of $\C$, since the v-structure cannot be newly forced by $\tau$. However, \ref{fig.meeksrules} (iii) cannot be an induced subgraph of $\C$ since $\C$ is maximally informative. 

\emph{Rule 4:} Assume that there is an induced subgraph of $\C$ over $\{A,B,C,D\}\subseteq\mathbf{V}$ with adjacencies as in Figure \ref{fig.meeksrules} (iv). For the case to be non-trivial, we exclude any subgraphs with other directed edges than $A\rightarrow B$ and $B\rightarrow D$; since $\C$ does not have any partially directed cycles, the subgraph must be undirected. If $A\rightarrow B\rightarrow D$ occurs in $\C^\tau$ it must be forced by $\tau$ through $\tau (A)<\tau (B)<\tau (D)$. Either $\tau (A)<\tau (C)$, $\tau (A)=\tau (C)$, or $\tau (A) >\tau (C)$. If $\tau (A)<\tau (C)$ or  $\tau (A) >\tau (C)$ then it follows that $A\rightarrow C$ or $A\leftarrow C$ according to $\tau$. If $\tau (A)=\tau (C)$, then by transitivity $\tau (C)<\tau (B)<\tau (D)$, and we orient $B\leftarrow C\rightarrow D$ according to $\tau$. 

(2) \emph{Rule 1:} Assume that there is an undirected induced subgraph of $\C$ over $\{A,B,C\}\subseteq\mathbf{V}$ with adjacencies as in Figure \ref{fig.meeksrules} (i). Assume that there is an undirected, unshielded path $\langle V_1,\ldots V_K=A\rangle$ of length $K> 1$ in $\C$ with $V_{K-1}\notin\adj{\C}{B}$.  Assume that $\tau (V_1)<\tau (V_2)$ such that $V_1\rightarrow V_2$ in $\C^\tau$ and assume that $n\geq K-1$: then $V_1\rightarrow\ldots\rightarrow A\rightarrow B$ in $\C^{\tau, n}$, and we obtain \ref{fig.meeksrules} (i).

\emph{Rule 2:} Assume that there is an induced subgraph of $\C$ over  $\{A,B,C\}\subseteq\mathbf{V}$  with adjacencies as in Figure \ref{fig.meeksrules} (ii). By the same argument as above, only an undirected induced subgraph of $\C$ can lead to an induced subgraph like \ref{fig.meeksrules} (ii) in $\C^{\tau, n}$. Moreover, by the argument above, we know that Figure \ref{fig.meeksrules} (ii) does not occur as an induced subgraph of $\C^{\tau}$; hence, we consider the case where $\tau(A)=\tau(B)=\tau(C)$ and this subgraph is undirected. The only way that $A\rightarrow B$ can be directed in $\C^{\tau,n}$ and not in $\C^\tau$ is if there is an undirected unshielded path $\langle V_1,\ldots ,V_K=A\rangle$ in $\C$ of length $K>1$ in $\C$ with $V_{K-1}\notin\adj{\C}{B}$ where $\tau (V_1)<\tau (V_2)=\tau (V_3)=\ldots=\tau (A)$ and $n\geq K-1$ such that $V_1\rightarrow\ldots\rightarrow A\rightarrow B$ in $\C^{\tau,n}$. In order for $A-C$ to remain undirected in $\C^{\tau,n}$, it must be the case that $V_{K-1}\in\adj{\C}{C}$. If $V_{K-2}\notin\adj{\C}{C}$ then $V_{K-1}\rightarrow C- B$ and $C\rightarrow B$ will be directed by Meek's 1st rule; hence, assume $V_{K-2}\in\adj{\C}{C}$. Assume now that $V_j\in\adj{\C}{C}$ for some $1\leq j\leq K-2$. Either (a) $V_{j-1}\notin\adj{\C}{C}$ or (b) $V_{j-1}\in\adj{\C}{C}$. (a) If $V_{j-1}\notin\adj{\C}{C}$ then $V_j\rightarrow C- B$ occurs and it must then be the case that $V_j\in\adj{\C}{B}$ in order for $C-B$ not to be directed as $C\rightarrow B$ or create a new v-structure, such that $B\rightarrow C$ would have been in $\C$. We then have $A\rightarrow B \any V_j$: this cannot be a v-structure since then $A\rightarrow B$ would have been oriented in $\C$ and if $B\rightarrow V_j$ we would have had cycle; hence $V_j\in\adj{\C}{A}$. Then $V_j\in\adj{\C}{V_{K-1}}$ since otherwise $V_{K-1}\rightarrow A\any V_j$ would have been a v-structure or we would have had a cycle; by the same argument, $V_j\in\adj{\C}{V_{K-2}}$, and we can proceed until we obtain $V_j\in\adj{\C}{V_{j+2}}$, which is a contradiction. (b) Assume instead that $V_{j-1}\in\adj{\C}{C}$ such that $V_j- C$ remains undirected. If $V_{j-2}\notin\adj{\C}{C}$, we obtain a contradiction as above; hence, assume that $V_{j-2}\in\adj{\C}{C}$. We can proceed with this until we obtain $V_1\in\adj{\C}{C}$. By transitivity, $\tau(V_1)<\tau (C)$ and we obtain $V_1\rightarrow C - A$ in $\C^\tau$. In order to obtain $A-C$ in $\C^{\tau,n}$, we must have $V_1\in\adj{\C}{A}$. By the same reasoning as above, the path then cannot be unshielded, and  we obtain a contradiction.

\emph{Rule 3:} If \ref{fig.meeksrules} (iii) is an induced subgraph of $\C^{\tau, n}$, then it is also an induced subgraph of $\C$, since the v-structure cannot be newly forced by Meek's 1st rule. However, \ref{fig.meeksrules} (iii) cannot be an induced subgraph of $\C$ since $\C$ is maximally informative.   

\emph{Rule 4:} Consider the induced subgraph of $\C$ over $\{A,B,C,D\}\subseteq\mathbf{V}$ with adjacencies as in Figure \ref{fig.meeksrules} (iv). By the same argument as above, only an undirected induced subgraph of $\C$ can lead to an induced subgraph like \ref{fig.meeksrules} (iv) in $\C^{\tau, n}$. Moreover, by the argument above, we know that Figure \ref{fig.meeksrules} (iv) does not occur as an induced subgraph of $\C^{\tau}$; hence, we consider the case where $\tau(A)=\tau(B)=\tau(C)=\tau(D)$ and this subgraph is undirected. 
The only way that $A\rightarrow B$ can be directed in $\C^{\tau,n}$ and not in $\C^\tau$ is if there is an undirected unshielded path $\langle V_1,\ldots ,V_K=A\rangle$ in $\C$ of length $K>1$ with $V_{K-1}\notin\adj{\C}{B}$. Assume that $\tau(V_1)<\tau (V_2)=\tau(V_3)=\ldots =\tau(A)$ such that $V_1\rightarrow V_2$ in $\C^{\tau}$ and $n\geq K-1$ applications of Meek's 1st rule results in $V_2\rightarrow\ldots \rightarrow A\rightarrow B$ in $\C^{\tau,n}$. If $V_{K-1}\notin\adj{\C}{C}$ then $A\rightarrow C$ will be forced by Meek's 1st rule. Hence, we assume that $V_{K-1}\in\adj{\C}{C}$. To obtain Figure \ref{fig.meeksrules} (iv) in $\C^{\tau,n}$ we require $C-B$ to be undirected; hence,  we can proceed the in a similar way as for Rule 2 and obtain a contradiction. 
\end{proof}

\subsection{Proof of Theorem \ref{theorem:cycles}}

\begin{proof}
Assume that $\C$ is the CPDAG of which $\G$ is constructed, and $\tau$ the tiered ordering. Let $\C^\tau$ denote the graph obtained by orienting edges in $\C$ according to $\tau$, and let $\C^{\tau, n}$ be the graph obtained by applying Meek's 1st rule to $\C^\tau$ $n$ times. By Lemma \ref{mainlemma}, there exists an $N$ such that for $n=N$ we have $\G=\C^{\tau, n}$; hence, we can without loss of generality assume $\C^{\tau, n}$ to be maximally informative. Since $\C$ does not contain any partially directed cycles, any partially directed cycle in $\G$ must be either (i) forced by $\tau$, or (ii) forced by Meek's 1st rule. Hence, any partially directed cycle in $\C^\tau$ or $\C^{\tau, n}$ must correspond to an undirected cycle in $\C$: Let $\langle V_1,\ldots V_K\rangle$ combined with $V_1 - V_K$ be an undirected cycle in $\C$. We will show that (i) the corresponding cycle in $\C^\tau$ cannot be partially directed, and (ii) the corresponding cycle in $\C^{\tau , n}$ cannot be partially directed.

(i) Without loss of generality, assume that $\tau (V_1)<\tau (V_2)$ such that the edge $V_1\rightarrow V_2$ is oriented in $\C^\tau$. If $\tau (V_1)<\tau (V_K)$ we will not obtain a partially directed cycle; therefore, assume that $\tau (V_K)\leq\tau (V_1)$. If for any $2\leq i\leq K-1:$ $\tau (V_i)>\tau (V_{i+1})$, again, it is no longer a partially directed cycle; therefore, assume $\tau (V_i)\leq\tau (V_{i+1})$ for all $2\leq i\leq K-1$. This then implies that $\tau (V_2)\leq\tau (V_K)\leq\tau (V_1)$. This is a contradiction to transitivity since we assumed $\tau (V_1)<\tau (V_2)$. We conclude that there cannot exist a partially directed cycle in $\C^\tau$.

(ii) By the above, there cannot be any partially directed cycles in $\C^\tau$; hence, if $\C^{\tau, n}$ contains a partially directed cycle, it must be forced through Meek's 1st rule; then $\tau (V_1)=\tau (V_2) =\ldots =\tau (V_K)$. Assume that there is an undirected unshielded path $\langle W_1,\ldots , W_m= V_1,V_2\rangle$ in $\C$, $m> 1$, with $\tau (W_1)<\tau(W_2)=\tau(W_3)=\ldots=\tau (V_1)$ such that $W_1\rightarrow W_2$ in $\C^\tau$, and assume that $n\geq m-1$ such that $W_1\rightarrow W_2\rightarrow\ldots\rightarrow W_{m-1}\rightarrow V_1\rightarrow V_2$ is in $\C^{\tau,n}$. If $W_{m-1}\notin\adj{\C}{V_K}$ the edge $V_1\rightarrow V_K$ follows from Meek's 1st rule and we no longer have a partially directed cycle; therefore, assume that $W_{m-1}\in\adj{\C}{V_K}$. Either (a) $W_{m-2}\notin\adj{\C}{V_K}$ or (b) $W_{m-2}\in\adj{\C}{V_K}$. (a) In this case $W_{m-1}\rightarrow V_K$ by Meek's 1st rule. If $W_{m-1}\notin\adj{\C}{V_{K-1}}$, then $V_K\rightarrow V_{K-1}$ and we no longer have a partially directed cycle; assume $W_{m-1}\in\adj{\C}{V_{K-1}}$. We can then proceed until we obtain $W_{m-1}\in\adj{\C}{V_2}$, which is a contradiction. (b) If $W_{m-3}\notin\adj{\C}{V_K}$, then $W_{m-2}\rightarrow V_K$ by Meek's 1st rule, and we obtain a contradiction as above. Hence, assume $W_{m-3}\in\adj{\C}{V_K}$. We can then proceed until we obtain $W_1\in\adj{\C}{V_K}$. By transitivity $\tau (W_1) <\tau (V_K)$ and the orientation $W_1\rightarrow V_K$ is forced by $\tau$. Assume that $W_1\rightarrow V_i$ for some $2<i\leq K$, then if $W_1\notin \adj{\C}{V_{i-1}}$, then $V_i\rightarrow V_{i-1}$ and we no longer have a partially directed cycle. Hence, assume that $W_1\in \adj{\C}{V_{i-1}}$ for all $2<i\leq K$. Then $W_1\in\adj{\C}{V_2}$ and for $m=2$ we have a contradiction. Assume $m>2$, then $W_1\in\adj{\C}{V_1}$ since otherwise we would have either a cycle or a v-structure $W_1\rightarrow V_2 \leftarrow V_1$, such that $V_1\rightarrow V_2$ would have been oriented in $\C$. Then $W_1\in\adj{\C}{W_{m-1}}$ since otherwise $W_{m-1}\rightarrow V_1$ would have been oriented in $\C$. We can proceed with this reasoning until we obtain $W_1\in\adj{\C}{W_3}$, which is a contradiction.
\end{proof}

\subsection{Proof of Corollary \ref{corollary:chain}}

\begin{proof}
In order to show that $\G$ is a chain graph it is sufficient to show that it does not contain any partially directed cycles, which is the case due to Theorem \ref{theorem:cycles}. Hence, we only need to show that the chain components are chordal: Assume that $\C$ is the CPDAG from which $\G$ is constructed. Assume $\pi$ is a chordless undirected cycle of length $\geq 4$ in $\G$; then $\pi$ must have been an undirected cycle in $\C$. Since $\C$ does not have any chordless undirected cycles, and since the procedure of orienting edges according to a tiered ordering or Meek's 1st rule does not delete edges or create partially directed cycles (c.f. Theorem \ref{theorem:cycles}), this is a contradiction.
\end{proof}

\subsection{Proof of Corollary \ref{corollary:possibly}}

The proof of Corollary \ref{corollary:possibly} follows directly from the following result:

\begin{corollary}
     Let $\G=(\V,\E)$ be a tiered MPDAG, and let $\pi=\langle V_1,\ldots ,V_K\rangle$ be a path in $\G$. Then $\pi$ is b-possibly causal from $V_1$ to $V_K$ if and only if it is possibly causal from $V_1$ to $V_K$.
\end{corollary}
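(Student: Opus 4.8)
The plan is to prove the two directions separately. One is immediate from the definitions; the other is where the structural result Theorem~\ref{theorem:cycles} does all the work, and it is the only real content.

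First I would dispatch the ``only if'' direction. If $\pi$ is b-possibly causal, then by definition no edge $V_i\leftarrow V_j$ with $1\le i<j\le K$ lies in $\G$; specialising to $j=i+1$ shows that $\pi$ contains no edge $V_i\leftarrow V_{i+1}$, which is exactly the definition of $\pi$ being possibly causal. No properties of tiered MPDAGs are needed here.

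Next I would handle the ``if'' direction by contradiction. Assume $\pi$ is possibly causal but not b-possibly causal, so there exist $1\le i<j\le K$ with an edge $V_j\rightarrow V_i$ in $\G$. Since possibly-causality of $\pi$ already rules out the path edge $V_i\leftarrow V_{i+1}$, we must have $j\ge i+2$, so $V_i,V_{i+1},\ldots,V_j$ are at least three distinct nodes. Along the subpath $\pi(V_i,V_j)$, possibly-causality forces every edge to be either $V_\ell\rightarrow V_{\ell+1}$ or $V_\ell-V_{\ell+1}$, so $\pi(V_i,V_j)$ is an undirected or partially directed path from $V_i$ to $V_j$. Appending the directed edge $V_j\rightarrow V_i$ closes it into a partially directed cycle in the sense of the definition recalled in Section~A of the Supplement, contradicting Theorem~\ref{theorem:cycles}. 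Hence no such edge exists and $\pi$ is b-possibly causal. The stated Corollary~\ref{corollary:possibly} then follows, since the b-adjustment and generalised adjustment criteria differ only in replacing ``b-possibly causal'' by ``possibly causal''.

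I expect the only delicate point to be recognising that the ``backward chord'' obstruction appearing in the definition of b-possibly causal is, together with the forward/undirected subpath guaranteed by possibly-causality, literally a partially directed cycle, so that Theorem~\ref{theorem:cycles} applies verbatim; everything else is a check of definitions, including the small observation that the offending edge $V_j\rightarrow V_i$ cannot itself be one of the edges of $\pi$.
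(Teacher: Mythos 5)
Your proof is correct and follows essentially the same route as the paper: the direction from b-possibly causal to possibly causal is a specialisation of the definition, and the converse turns a backward edge $V_j\rightarrow V_i$ together with the forward/undirected subpath $\pi(V_i,V_j)$ into a (partially) directed cycle, contradicting Theorem~\ref{theorem:cycles}. Your explicit observation that $j\ge i+2$ (so the offending edge is a genuine chord, not a path edge) is a small point the paper leaves implicit, but otherwise the arguments coincide.
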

 
\begin{proof}
``If'' Assume that $\pi$ is possibly causal from $V_1$ to $V_K$. Then there is no $V_i,V_j$ on $\pi$ with $i<j$ with $V_i\leftarrow V_j$ in $\G$, since otherwise $\langle V_i,\ldots ,V_j\rangle$ combined with $\langle V_j,V_i\rangle$ would constitute a partially directed cycle in $\G$, which would be a contradiction to Theorem \ref{theorem:cycles}.

``Only if'' Assume instead that $\pi$ is not possibly causal from $V_1$ to $V_K$. Then there is an edge  $V_i\leftarrow V_{i+1}$ for some $1\leq i\leq k$ on $\pi$. Then $\G$ contains $V_i,V_j$ on $\pi$ with $i<j$ with $V_i\leftarrow V_j$ and no path in $\G$ is then b-possibly causal from $V_1$ to $V_K$; in particular, $\pi$ is not b-possibly causal from $V_1$ to $V_K$.
\end{proof}

\subsection{Proof of Corollary \ref{corollary:ida}}

The proofs of the validity of the output of the local IDA-algorithm and the joint IDA-algorithm rely on the fact that in a CPDAG, no orientation of the undirected edges can lead to a new v-structure, or a cycle, that includes an edge that is already directed in the CPDAG \citep{meek1995}. It is straightforward to show that the same is true for tiered MPDAGs:

\begin{lemma}\label{suppl:lemma1}
    Let $\G=(\V,\E)$ be a tiered MPDAG, and let $\G_u$ and $\G_d$ be the undirected and the directed parts of $\G$ respectively. No orientation of the edges in $\G_u$ can create either (i) a v-structure in $\G$ that includes an edge in $\G_d$, or (ii) a cycle in $\G$ that includes an edge in $\G_d$.
\end{lemma}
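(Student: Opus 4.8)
The plan is to obtain both parts as direct consequences of the two structural facts already established for tiered MPDAGs: that $\G$ is closed under Meek's rule~1 (Lemma~\ref{mainlemma}), and that $\G$ is a chain graph with chordal chain components (Corollary~\ref{corollary:chain}, itself a consequence of Theorem~\ref{theorem:cycles}). Part~(i) follows from the former and part~(ii) from the latter, so there is essentially no new argument to develop; the content lies in invoking the correct fact and, for part~(i), in isolating one harmless case.

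For (i) I would argue by contradiction. Suppose some orientation of the edges in $\G_u$ produces a v-structure $A\rightarrow B\leftarrow C$ that includes an edge of $\G_d$; without loss of generality, $A\rightarrow B\in\G_d$ (the case $C\rightarrow B\in\G_d$ is symmetric). If the $B$--$C$ edge also lies in $\G_d$, then, since $A\notin\adj{\G}{C}$, the triple $A\rightarrow B\leftarrow C$ is already a v-structure of $\G$ and is not created by the orientation; so we may assume $B-C$ was undirected in $\G$ and was oriented as $C\rightarrow B$. But then $\G$ contains $A\rightarrow B-C$ with $A\notin\adj{\G}{C}$, which is precisely the antecedent of Meek's rule~1 and would force the orientation $B\rightarrow C$. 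This contradicts Lemma~\ref{mainlemma}, according to which $\G$ is obtained by exhaustively applying Meek's rule~1 and therefore contains no such configuration with $B-C$ left undirected.

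For (ii) I would use the chain-graph structure. By Corollary~\ref{corollary:chain}, the chain components of $\G$ partition $\V$: every undirected edge of $\G$ lies within a single chain component, while every edge of $\G_d$ joins two \emph{distinct} chain components (otherwise a directed edge together with an undirected path inside its component would form a partially directed cycle, contradicting Theorem~\ref{theorem:cycles}). Contracting each chain component to a point, the directed edges induce an acyclic digraph on the components --- again by Theorem~\ref{theorem:cycles} --- so there is an ordering $\prec$ of the chain components such that every edge of $\G_d$ points from a $\prec$-earlier component to a $\prec$-later one. Now fix any orientation of $\G_u$ and suppose the resulting graph contains a directed cycle $\gamma$ that uses at least one edge of $\G_d$. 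Traversing $\gamma$ and recording the chain component of each successive vertex: an edge originating from $\G_u$ leaves the component unchanged (its endpoints do not move), while an edge of $\G_d$ strictly advances it in $\prec$. Going once around $\gamma$, the component thus never regresses and strictly advances at least once, so it returns strictly later than it started --- impossible. This contradiction proves (ii).

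I expect no serious obstacle: both claims are corollaries of results already proved. The only points requiring care are (a) in part~(i), distinguishing a putative v-structure that is genuinely new from one that was present in $\G$ all along, and (b) in part~(ii), recalling the two features of chain graphs used --- that directed edges connect distinct chain components and admit a topological ordering, and that undirected edges stay within components --- after which the component-tracking argument closes immediately.
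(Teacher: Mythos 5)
Your proposal is correct, and both parts rest on exactly the two facts the paper invokes: maximality under Meek's rule~1 (Lemma~\ref{mainlemma}) for (i), and the absence of partially directed cycles (Theorem~\ref{theorem:cycles}) for (ii). Part~(i) is essentially the paper's argument, with the helpful extra care of separating a genuinely new v-structure from one already present in $\G_d$. For part~(ii) you take a longer route than necessary: you pass through the chain-component condensation of Corollary~\ref{corollary:chain}, order the components topologically, and track the component along the hypothetical cycle. The paper instead observes directly that any directed cycle in the oriented graph that uses an edge of $\G_d$ corresponds, back in $\G$, either to a directed cycle (impossible in a PDAG) or to a cycle mixing directed and undirected edges, i.e.\ a partially directed cycle, contradicting Theorem~\ref{theorem:cycles}. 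Your component-tracking argument is sound and makes the chain-graph structure explicit, but it buys nothing beyond the one-line reduction, since the acyclicity of the condensation you rely on is itself just a repackaging of the no-partially-directed-cycles property.
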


\begin{proof}
    (i) By Lemma \ref{mainlemma} we know that $\G$ is maximal relative to Meek's 1st rule; this implies that no unshielded triple of the form $X_i\rightarrow X_j - X_k$ can occur in $\G$.

    (ii) Assume that we could orient the edges in $\G_u$ such that we would create a cycle in $\G$ including an edge from $\G_d$. This would require a cycle in $\G$ consisting of at least one directed part and at least one undirected part; however, this would constitute a partially directed cycle, which is a contradiction to Theorem \ref{theorem:cycles}.
\end{proof}

\begin{proof}[Proof of Corollary \ref{corollary:ida}]
    We will first consider the joint IDA, and we follow the proof of Theorem 5.1 in \citet{nandy2017estimating}: Let $\G_{u,1},\ldots ,\G_{u,n}$ denote the chain components of $\G_u$. Assume that only $\G_{u,1},\ldots ,\G_{u,l}$ contain a node from $\mathbf{X}$. By Lemma \ref{suppl:lemma1} we can orient each component $\G_{u,1},\ldots ,\G_{u,l}$ into DAGs independently of the rest of the graph and obtain all valid parent sets from these. The multiplicity statement follows directly from \citet{nandy2017estimating}.
    
    We will now turn to the local IDA and we will follow the proof of Lemma 3.1 in \citet{maathuis2009estimating}, which shows the following result: Let $X\in\V$ and let $\mathbf{S}\subset\nb{\G}{X}$, then $\G_{\mathbf{S}\rightarrow X}$ does not create new v-structures with $X$ as a collider if and only if there exists a DAG $\D$ in the (restricted) equivalence class represented by $\G$ for which $\pa{\D}{X}=\pa{\G}{X}\cup\mathbf{S}$. The "if" part is trivial, we show the "only if" part. As argued above, Lemma \ref{suppl:lemma1} allows us to consider each connected component of $\G_u$ separately. Assume that $X$ is in $\G_{u,i}$, we then need to show that we can orient $\G_{u,i}$ into a DAG without any new v-structures, where $\mathbf{S}$ is the parent set of $X$. In order to show that such an orientation exists, \citet{maathuis2009estimating} rely on two facts (1) the induced subgraph over $X\cup\mathbf{S}$ is complete, and (2) $\G_{u,i}$ is chordal. By Corollary \ref{corollary:chain} we know that (2) is satisfied. Since orienting edges from $\mathbf{S}$ into $X$ does not create any new v-structures, all nodes in $\mathbf{S}$ must be adjacent in $\G$; since $\mathbf{S}\subseteq\nb{\G}{X}$ it follows that the induced subgraph over $X\cup\mathbf{S}$ is complete. The rest follows from the proof of Lemma 3.1 in \citet{maathuis2009estimating}.
\end{proof}

\section{Proofs for section \ref{sec:character}}

\subsection{Proof of Theorem \ref{mainthm}}

\begin{proof}
We will make use of the following result: Let $\pi=\langle V_1, V_2, \ldots ,V_K\rangle$ be an unshielded path in $\C_u$, then $\pi$ is unshielded in $\C$ as well: If for any subpath $V_{k-1}-V_k-V_{k+1}$ of $\pi$ there were an edge $V_{k-1}\any V_{k+1}$ in $\C$ that was not in $\C_u$, then this edge would be directed; combined with $V_{k-1}-V_k-V_{k+1}$ this would then create a partially directed cycle, which cannot occur in $\C$ since it is a CPDAG. 

\emph{``Only if''}: $(i)$: Assume that (i) is violated. Let $\pi_1=\langle V_1, \ldots ,V_K\rangle$ be an unshielded path in $\C^{\tau_1}_u$ with $\pi_2=\langle V_1, \ldots ,V_K\rangle$ being the corresponding path in $\C^{\tau_2}_u$, and assume that the first cross-tier edge on $\pi_1$  is not the same as the first cross-tier edge on $\pi_2$. Additionally, assume that $\pi_1$ and $\pi_2$ are both earliest. 

Since $\pi_1$ and $\pi_2$ are unshielded and undirected, the corresponding paths in the underlying DAGs cannot contain colliders: They are either directed or they contain a subpath of the form $V_{k-1}\leftarrow V_k\rightarrow V_{k+1}$. In the latter case, either all cross-tier edges on $\pi_1$ will be on $\pi_1(V_1,V_k)$ or $\pi_1(V_k,V_K)$, or they will both contain cross-tier edges; similarly for $\pi_2$. It will then be sufficient to show that either  $\pi_1(V_1,V_k)\neq\pi_2(V_1,V_k)$ or $\pi_1(V_k,V_K)\neq\pi_2(V_k,V_K)$. Moreover, since we assume all background knowledge to be correct, the paths must agree on the direction. Hence, we can without loss of generality assume that the corresponding paths in the underlying DAGs are directed from $V_1$ to $V_K$. 

Assume that the first cross-tier edge on $\pi_1$ is $V_i\rightarrow V_{i+1}$ for $1\leq i\leq K$, while the first cross-tier edge on $\pi_2$ is $V_j\rightarrow V_{j+1}$ with $i < j\leq K$. Let $\pi'_1$ be the path in $\G_1$ corresponding to $\pi_1$, and let $\pi_2'$ be the corresponding path in $\G_2$. Since only Meek's 1st rule applies (c.f. Lemma \ref{mainlemma}), the subpath $\pi'_1(V_1,V_i)$ will remain undirected since no new arrowheads are oriented into this subpath. Assume for contradiction that for some $V_h$ with $1\leq h\leq i-1$ there were a node $W\in\adj{\C_u}{V_h}$ with $\tau_1 (W)< \tau_1 (V_h)$ such that $W\rightarrow V_h$ in $\C_u^{\tau_1}$. Then the path $\pi'=\langle W, V_h, V_{h+1},\ldots ,V_K\rangle$ in $\C_u^{\tau_1}$ would be earlier than $\pi_1$, and $\pi_1$ would contain the subpath $\langle V_h, V_{h+1},\ldots ,V_K\rangle$ of $\pi'$, which is a contradiction since we assumed $\pi_1$ to be earliest. The subpath $\pi'_1(V_i,V_K)$ will be directed: $V_i\rightarrow V_{i+1}$ is forced by $\tau_1$, and we will then be able to iteratively orient each node on $\langle V_{i+1}, \ldots ,V_K\rangle$ in the direction of $V_K$ according to Meek's 1st rule when constructing $\G_1$, c.f. Lemma \ref{mainlemma}. Analogously, the subpath of $\pi'_2(V_1,V_j)$ is undirected, while the subpath $\pi'_2(V_j,V_K)$ is directed in $\G_2$. Hence, we have that $\pi'_1(V_i,V_j)\neq \pi'_2(V_i,V_j)$: It then follows that $\G_1\neq\G_2$. 

$(ii)$: Assume that (ii) is violated. Let $V_i\any V_j$ be an edge for which $\C_u^{\tau_1}$ and $\C_u^{\tau_2}$ disagree on whether it is directed or not. Since $V_i\any V_j$ is only contained on shielded paths, it can only be oriented by background knowledge c.f. Lemma \ref{mainlemma}, since Meek's 1st rule does not apply. It follows that $\G_1\neq\G_2$.

\emph{``If''}: Since $\G_1$ and $\G_2$ are constructed from the same CPDAG, they will agree on every edge that is directed in $\C$; hence, we will consider $\C_u$. Assume that (i) and (ii) are both satisfied. By (ii) we know that $\G_1$ and $\G_2$ will agree on the orientation of any fully shielded edge, so we need to show that they will also agree on the orientation of any edge that is not fully shielded; we will consider the unshielded paths. 

Let $\pi_1=\langle V_1, \ldots ,V_K\rangle$ be an unshielded path in $\C_u^{\tau_1}$ and let  $\pi_2=\langle V_1, \ldots ,V_K\rangle$ be the corresponding path in $\C_u^{\tau_2}$. Assume that $\D_1\in [\C]$ is a DAG giving rise to $\tau_1$ and $\D_2\in [\C]$ is a DAG giving rise to $\tau_2$. By the same argument as above, we may assume that either (a) the corresponding paths in $\D_1$ and $\D_2$ are directed from $V_1$ to $V_K$, or (b) the corresponding path in $\D_1$ contains $V_{k-1}\leftarrow V_k\rightarrow V_{k+1}$ for some $2\leq k\leq {K-1}$; i.e. the subpaths  will be directed from $V_k$ to $V_1$ and from $V_k$ to $V_K$, and the corresponding path in $\D_2$ contains $V_{l-1}\leftarrow V_l\rightarrow V_{l+1}$ for some $2\leq l\leq {K-1}$; i.e. the subpaths will be directed from $V_l$ to $V_1$ and from $V_l$ to $V_K$, or (c) the corresponding path in one DAG is directed from $V_1$ to $V_K$, and the corresponding path in the other DAG contains a subpath $V_{k-1}\leftarrow V_k\rightarrow V_{k+1}$ for some $2\leq k\leq {K-1}$. Since (b) is the most general case, we will only consider this; (a) and (c) can be verified in a similar way.

Either $\pi_1(V_1,V_k)$ and $\pi_2(V_1,V_l)$ will have a cross-tier edge, $\pi_1(V_k,V_K)$ and $\pi_2(V_l,V_K)$ will have a cross-tier edge, or they will all have a cross-tier edge. We consider the most general case where they all have a cross-tier edge, and assume that the first cross-tier edge on $\pi_1(V_k,V_K)$ and $\pi_2(V_l,V_K)$ is $V_{i}\rightarrow V_{i+1}$ and that the first cross-tier edge on $\pi_1(V_1,V_k)$ and $\pi_2(V_1,V_l)$ is $V_{j}\rightarrow V_{j-1}$. Let $\pi'_1$ be the path in $\G_1$ corresponding to $\pi_1$, and let $\pi_2'$ be the corresponding path in $\G_2$. By similar arguments as above, it then follows that $\pi_1'(V_j, V_i)=\pi_2'(V_j,V_i)$ will remain undirected, $\pi_1'(V_1, V_j)=\pi_2'(V_1,V_j)$ will be directed from $V_j$ to $V_1$, and $\pi_1'(V_i, V_K)=\pi_2'(V_i,V_K)$ will be directed from $V_i$ to $V_K$. The case where $\pi_1$ and $\pi_2$ only have a single cross-tier edge is special case of this. Hence, $\pi_1'=\pi_2'$.
\end{proof}

\subsection{Proof of Corollary \ref{corollary:informative}}

\begin{proof}
Let $\G_1$ be the MPDAG obtained from $\C$ relative to $\tau_1$, and let $\G_2$ be the MPDAG obtained from $\C$ relative to $\tau_2$. Assume that (i) and (ii) are satisfied. If $\C^{\tau_1}_u$ does not have any additional oriented edges, then $\G_1=\G_2$ by Theorem \ref{mainthm}.

Assume that (i), (ii), and (iii) are satisfied. Let $\pi_1=\langle V_1,\ldots V_K\rangle$ be an earliest unshielded path in $\C^{\tau_1}_u$ and let $V_i\rightarrow V_{i+1}$ be the first cross-tier edge on $\pi_1$. Let $\pi_1'$ be the corresponding path in $\G_1$. Then $\pi'_1(V_1,V_i)$ will be undirected and $\pi'_1(V_1,V_i)$ will be directed, by similar arguments as in the proof of Theorem \ref{mainthm}. Let $\pi_2=\langle V_1,\ldots V_K\rangle$ be the path in $\C^{\tau_2}_u$ corresponding to $\pi_1$ and assume that $V_i-V_{i+1}$ is not a cross-tier edge in $\C^{\tau_2}_u$. Let $\pi_2'$ be the corresponding path in $\G_2$. Either $\pi_2$ will have at least one cross-tier edge, or it will have no cross-tier edges. If $\pi_2$ has no cross-tier edges, then $\pi_2'$ will be undirected: Since $\pi_1'$ will be directed from $V_i$ to $V_K$, $\G_1$ will be contained in $\G_2$. Assume instead that $\pi_2$ has at least one cross-tier edge and that the first cross-tier edge is $V_j\rightarrow V_{j+1}$. Then by (i) this is also a cross-tier edge on $\pi_1$. Since $V_j\rightarrow V_{j+1}$ is not the first cross-tier edge on $\pi_1$ it follows that $i\leq j$; since $V_i\rightarrow V_{i+1}$ is not a cross-tier edge on $\pi_2$ we conclude that $i<j$. By similar arguments as in the proof of Theorem \ref{mainthm} we then know that $\pi'_1(V_1,V_i)=\pi'_2(V_1,V_i)$ are undirected, $\pi'_1(V_j,V_K)=\pi'_2(V_j,V_K)$ are directed, and $\pi'_1(V_i,V_j)\neq\pi'_2(V_i,V_j)$ since $\pi'_1(V_i,V_j)$ is directed and $\pi'_2(V_i,V_j)$ is undirected. Then $\G_1$ will be contained in $\G_2$ and $\tau_1$ will be more informative than $\tau_2$.

Assume that (i), (ii) and (iv) are satisfied. Following the proof of Theorem \ref{mainthm}, the fully shielded edges can only be oriented by background knowledge and  $\G_1$ will be contained in $\G_2$, and $\tau_1$ will be more informative than $\tau_2$.

Assume that (i), (ii), (iii) and (iv) are all satisfied. Then by the same arguments as above, $\G_1$ will be contained in $\G_2$, and $\tau_1$ will be more informative than $\tau_2$.
\end{proof}

\end{document}